\newcommand{\cblock}[3]{
  \hspace{-1.5mm}
  \begin{tikzpicture}
    [
    node/.style={square, minimum size=10mm, thick, line width=0pt},
    ]
    \node[fill={rgb,255:red,#1;green,#2;blue,#3}] () [] {};
  \end{tikzpicture}%
}
\theoremstyle{plain}
\newtheorem{theorem}{Theorem}[section]
\newtheorem{lemma}[theorem]{Lemma}
\newtheorem{corollary}[theorem]{Corollary}
\theoremstyle{definition}
\newtheorem{definition}[theorem]{Definition}
\theoremstyle{remark}
\newtheorem{remark}[theorem]{Remark}
\def\eqref#1{equation~\ref{#1}}
\def\1{\bm{1}}
\def\vk{{\bm{k}}}
\def\vo{{\bm{o}}}
\def\vq{{\bm{q}}}
\def\vs{{\bm{s}}}
\def\vv{{\bm{v}}}
\def\vx{{\bm{x}}}
\def\vy{{\bm{y}}}
\def\vz{{\bm{z}}}
\DeclareMathAlphabet{\mathsfit}{\encodingdefault}{\sfdefault}{m}{sl}
\SetMathAlphabet{\mathsfit}{bold}{\encodingdefault}{\sfdefault}{bx}{n}
\def\gD{{\mathcal{D}}}
\def\gS{{\mathcal{S}}}
\def\gT{{\mathcal{T}}}
\def\sD{{\mathbb{D}}}
\def\sN{{\mathbb{N}}}
\def\sR{{\mathbb{R}}}
\newcommand{\softmax}{\mathrm{softmax}}
\DeclareMathOperator*{\argmax}{arg\,max}
\newcommand{\inner}[1]{\left\langle #1\right\rangle}
\newcommand{\abs}[1]{\left \lvert #1 \right \rvert}
\newcommand{\norm}[1]{\left \lVert #1 \right \rVert}
\newcommand{\reals}{\sR}
\newcommand{\BOS}{\left\langle{\tiny\textsc{BOS}}\right\rangle}
\newcommand{\COPY}{\left\langle{\tiny\textsc{COPY}}\right\rangle}
\newcommand{\EOS}{\left\langle{\tiny\textsc{EOS}}\right\rangle}
\newcommand{\ngram}{\mathrm{n-gram}}
\newcommand{\err}{\mathrm{err}}
\newcommand{\mem}{\mathrm{mem}}
\newcommand{\key}{\mathrm{key}}
\newcommand{\query}{\mathrm{query}}
\newcommand{\val}{\mathrm{value}}
\icmltitlerunning{Repeat After Me: Transformers are Better than State Space Models at Copying}
\begin{document}

\twocolumn[
\icmltitle{Repeat After Me: \\Transformers are Better than State Space Models at Copying\\
Transformers are Better than State Space Models at Copying}

\begin{icmlauthorlist}
\icmlauthor{Samy Jelassi}{cmsa}
\icmlauthor{David Brandfonbrener}{kempner}
\icmlauthor{Sham M.\ Kakade}{kempner,seas}
\icmlauthor{Eran Malach}{kempner}
\end{icmlauthorlist}

\icmlaffiliation{cmsa}{Harvard University, Center of Mathematical Sciences and Applications}
\icmlaffiliation{kempner}{Harvard University, Kempner Institute for the Study of Natural and Artificial Intelligence}
\icmlaffiliation{seas}{Harvard University, Departments of Computer Science and Statistics}

\icmlcorrespondingauthor{Samy Jelassi}{sjelassi@fas.harvard.edu}

\icmlkeywords{Machine Learning, ICML}

\vskip 0.3in
]

\printAffiliationsAndNotice{} 

\begin{abstract}
Transformers are the dominant architecture for sequence modeling, but there is growing interest in models that use a fixed-size latent state that does not depend on the sequence length, which we refer to as ``generalized state space models'' (GSSMs). 
In this paper we show that while GSSMs are promising in terms of inference-time efficiency, they are limited compared to transformer models on tasks that require copying from the input context.
We start with a theoretical analysis of the simple task of string copying and prove that a two layer transformer can copy strings of exponential length while GSSMs are fundamentally limited by their fixed-size latent state. 
Empirically, we find that transformers outperform GSSMs in terms of efficiency and generalization on synthetic tasks that require copying the context.
Finally, we evaluate pretrained large language models and find that transformer models dramatically outperform state space models at copying and retrieving information from context.
Taken together, these results suggest a fundamental gap between transformers and GSSMs on tasks of practical interest.
\end{abstract}

\section{Introduction}

Transformers \citep{vaswani2017attention} are the workhorse of modern sequence modeling, achieving remarkable performance on a variety of tasks, but they have unavoidable inefficiencies. Specifically, they require $ \Omega(L)$ memory\footnote{In some naive implementations of transformers, it is common to allocate a $L \times L$ matrix to compute the attention. However, memory efficient implementations, such as FlashAttention \cite{dao2022flashattention}, compute the attention with $O(L)$ memory. } and compute to predict the next token of a sequence of length $ L$. 

This has spurred a boom in attempts to create architectures that can achieve similar performance as transformers, but with $ O(1)$ memory to predict each token.
This class of models includes state space models like S4 \citep{gu2021efficiently} or Mamba \citep{gu2023mamba}, as well as traditional RNN models \citep{hochreiter1997long} and models that can be trained in parallel like linear attention \citep{katharopoulos2020transformers, choromanski2020rethinking} and parallel RNNs \citep{bradbury2016quasi, peng2023rwkv, sun2023retentive}. 
In this paper, we will refer to this entire class of models that use a fixed-size memory as ``generalized state space models'' or GSSMs (see a formal definition in Section \ref{sec:theory}).

Recent work has demonstrated impressive performance of GSSMs, but it is not yet clear what these models sacrifice for their improved efficiency, if anything.
In this paper, we find that one particular capability that is sacrificed is the ability to retrieve and repeat parts of the input context.
As a result, transformers are better than GSSMs at a variety of tasks that require accessing arbitrary parts of the context. 

To understand this gap in capabilities, we begin by presenting a theoretical analysis of the copying task\footnote{Note that we study copying of the input and \emph{not} copying of training data \citep{mccoy2023much, carlini2022quantifying}}. First, we show via construction that a simple transformer model can copy strings of length that is exponential in the number of heads of the transformer. 
This construction relies on the ability of the transformer to implement a mechanism of ``storage'' and retrieval of sequences of n tokens (n-grams), where the n-grams are used to track where to copy from. 
In contrast, we show that, trivially, GSSMs cannot accurately copy strings with more bits than the size of the latent state.

\begin{figure*}[t]
\centering
\begin{subfigure}[t]{0.31\textwidth}
\centering
\vskip 0pt
    \includegraphics[height=3.8cm]{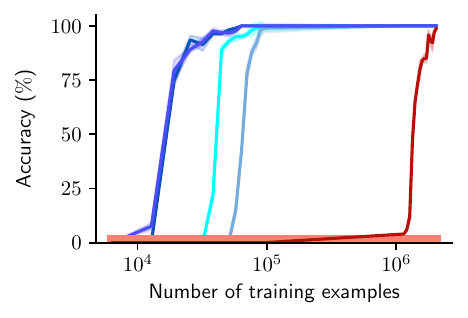}
    \small 
    %\centering
    \mbox{\hspace*{3cm}Transformer:} %\cblock{15}{82}{186}\hspace{1mm}Alibi\hspace{1.5mm} \cblock{115}{194}{251}\hspace{1mm}HAlibi 
    \\
    \hspace*{3.7cm}GSSM: 
    \caption{Copying: training efficiency.}
    \label{fig:in_distribution_copy}
\end{subfigure}
\hfill
\begin{subfigure}[t]{0.31\textwidth}%0.33
\centering
\vskip 0pt
\includegraphics[height=3.8cm]{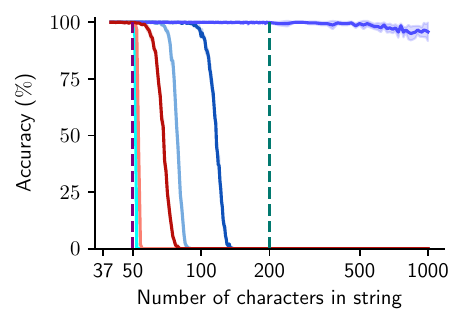}
\small 

\mbox{\hspace*{-1.8cm}\cblock{0}{255}{255} \hspace{0.5mm}RoPE\hspace{1mm} \cblock{118}{171}{223} \hspace{0.5mm}NoPE\hspace{1mm} \cblock{15}{82}{186}\hspace{1mm}Alibi\hspace{1mm} \cblock{77}{77}{255}\hspace{1mm}HAlibi}\\ %GSSM:
\mbox{\hspace*{-4.1cm}\cblock{250}{128}{114}\hspace{1mm}LSTM\hspace{1.5mm}\cblock{184}{15}{10}\hspace{1mm}Mamba}
\caption{Copying: length generalization}
\label{fig:length_gen}
\end{subfigure}
\hfill
\begin{subfigure}[t]
{0.31\textwidth}
\centering
\vskip 0pt
\includegraphics[height=3.8cm]{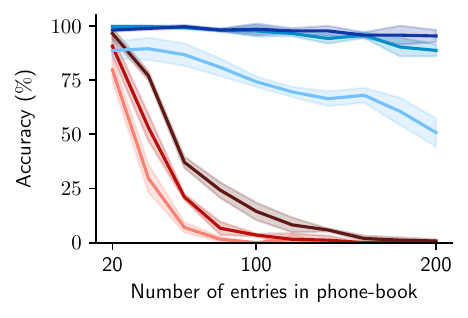}
\centering
\small
Pythia: \cblock{115}{194}{251}\hspace{1mm}410M
\hspace{1mm}\cblock{0}{142}{204}\hspace{1mm}1.4B
\hspace{1mm}\cblock{17}{52}{166}\hspace{1mm}2.8B\\
Mamba:\hspace{1mm}\cblock{250}{128}{114}\hspace{1mm}360M
\hspace*{1mm}\cblock{184}{15}{10}\hspace{1mm}1.4B\hspace{1mm}\cblock{94}{25}{20}\hspace{1mm}2.8B
\caption{Lookup with pretrained models}
\label{fig:phonebook}
\end{subfigure}
\caption{\textbf{(a) Copying: training efficiency.} Here we train models to copy strings of length $ \leq 300$ and evaluate string-level accuracy on strings of length 300. Transformers train much faster than GSSMs. An LSTM cannot even learn the task within this number of samples. 
\textbf{(b) Copying: length generalization.} Here we train models to copy on strings of length $ \leq 50$ until all models are perfect in-distribution and evaluate string-level accuracy. Purple dotted line indicates maximum training string length and green dotted line indicates context window during training. Evaluating on longer inputs, the transformer models dramatically outperform the GSSMs. Using our Hard-Alibi positional encoding, we can even generalize well beyond the training context size.
\textbf{(c) Lookup with pretrained models.} Here the task requires looking up and retrieving a number from a ``phone book'' of varying length that is entirely in context. We evaluate pretrained models 1-shot without any finetuning. Pythia (a transformer model) substantially outperforms Mamba (a GSSM) across model sizes.}
\end{figure*}

Our theory studies representation expressivity, but not whether these representations will be learned. Moreover, in practice a large GSSM may have enough capacity to represent the entire input in the latent state, at least in theory. To resolve these concerns, we conduct a variety of synthetic experiments with models of $\sim$160M parameters. We find that transformers are both much more efficient at learning to copy (Figure \ref{fig:in_distribution_copy}) and also generalize better to longer inputs (Figure \ref{fig:length_gen}). Additionally, we verify  experimentally that the copy ``algorithm'' learned by transformers indeed relies on n-grams to perform a lookup of where to copy from (\autoref{fig:copy_alg}), similarly to our theoretical construction.

Finally, we present a variety of experiments on pre-trained models to test their ability to remember and access the input context. In particular, we show that Pythia transformers \citep{biderman2023pythia} outperform Mamba GSSMs \citep{gu2023mamba} of similar size at a variety of memory-intensive tasks including copying and retrieving information from the context (Figure \ref{fig:phonebook}). This is especially notable since the Mamba models achieve lower perplexity than the Pythia models at language modeling on the Pile \citep{gao2020pile}. These experiments illustrate the practical relevance of the memory issues that we raise, and hint at one way that architectual choices can impact the downstream performance of LLMs above and beyond training perplexity.

\section{Theory: Representational Capacity}\label{sec:theory}

In this section we use the copy task for a theoretical comparison between state space models and transformers. We prove two main results. First, we construct a small transformer that solves the copy task for sequences lengths that are exponential in the transformer size. Second, we show that \textit{any} state space model \textit{fails} to solve the copy task, unless its latent state grows linearly with the sequence length. 

\subsection{Setting}

Let $\sD$ be a dictionary, which contains $D$ ``alphabet'' tokens.
A sequence-to-sequence model is a function $H : \sD^* \to \sD^*$, which maps an input sequence of tokens to an output sequence. We think of the input $x_1, \dots, x_i$ as the ``prompt'' to the model, and of the output sequence $H(x_1, \dots, x_i)$ as the generated ``answer''.

A sequence-to-token mapping is a function $h : \sD^* \to \sD$. Any sequence-to-token model $ h $ naturally defines a sequence-to-sequence model $ H$ by auto-regressive inference.
Namely, for every input sequence $x_1, \dots, x_i \in \sD$ we define recursively $x_{i+j} = h(x_1, \dots, x_{i+j-1})$ and let $H(x_1, \dots, x_i) = (x_{i+1}, x_{i+2}, \dots)$.

\paragraph{Generalized state space models.}
A state space $\gS$ is some finite set. We denote by $\mem(\gS)$ the number of bits required to encode the states of $\gS$, namely $\mem(\gS) = \log(\abs{\gS})$. A \emph{generalized state space model} (GSSM) is a sequence model defined by an update rule $u : \gS \times \sD  \to \gS$ and some output function $r : \gS \to \sD$. Let $s_0 \in \gS$ be some initial state. Given some sequence $x_1, \dots, x_L$, the state of the model at iteration $i$ is denoted by $S_i(x_1, \dots, x_i)$ and the output token is denoted by $R_i(x_1, \dots, x_i)$. The state and output are defined recursively:
1) $S_0(\emptyset) = s_0$, 2) $S_i(x_1, \dots, x_i) = u(S_{i-1}(x_1, \dots, x_{i-1}), x_i)$, 3) $R_i(x_1, \dots, x_i) = r(S_{i}(x_1, \dots, x_{i}))$.

\begin{figure}[t]
    \hspace*{-.2cm}\includegraphics[width=\linewidth]{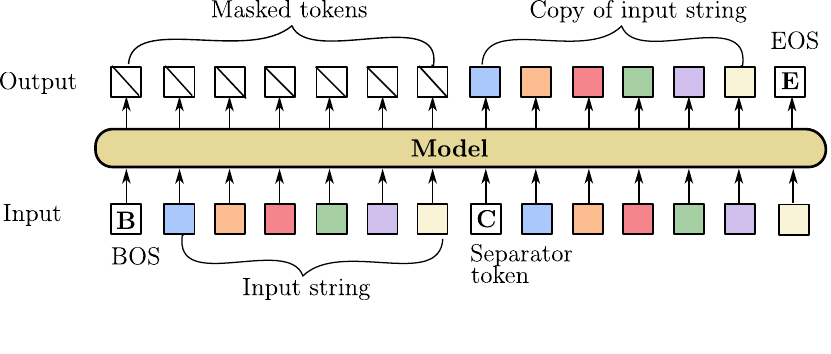}
    \vspace{-0.7cm}
    \caption{An illustration of the copy task.}
    \label{fig:copy_task}
    \vspace{-0.5cm}
\end{figure}

\begin{remark}
    It is important to note that for any sequence model, there are two types of memory considerations: 1) input-independent memory (\emph{parameters}) and 2) input-dependent memory (\emph{activations}). The GSSM definition constraints the input-dependent memory (\emph{activations}), which corresponds to $\mem(\gS)$, and does not restrict in any way the amount of input-independent memory (\emph{parameters}) or the run-time of state updates. Since our main goal is to show a lower bound on the state space memory, leaving all other considerations unconstrained only strengthens our results.
\end{remark}

\paragraph{Transformers.} 
Given some input of length $L$ and dimension $d$, denoted $\vx_1, \dots, \vx_L \in \reals^d$, an attention head is parameterized by $W_k, W_q, W_v \in \reals^{d \times d}$. We denote $\vk_i = W_k \vx_i, \vq_i = W_q \vx_i, \vv_i = W_v \vx_i$ and denote $K_i = [\vk_1, \dots, \vk_i] \in \reals^{d \times i}$ and $V_i = [\vv_1, \dots, \vv_i] \in \reals^{d \times i}$. We denote the output of the head at token $i$ by $\vo_i \in \reals^d$, where $\vo_i = V_i \cdot \softmax(K_i \cdot \vq_i)$.

We consider a transformer with $l$ attention heads, each one of dimension $d$ so that the full dimension of the Transformer is $dl$. 
An embedding is some mapping $\Psi : \sD \to \reals^d$. An MLP is a function $f : \reals^{dl} \to \reals^{dl}$ s.t. $f(\vx) = U_1 \sigma (U_2 \vx)$, for some activation function $\sigma$. Both the embedding and the MLP layer are assumed to be applied on the token level. An attention-block is a set of $l$ heads applied in parallel, and a transformer-block is an attention-block followed by an MLP which operates on the concatenated output of the $l$ heads. The output of the model is sampled based on the output of the final layer. For simplicity, we study the $\argmax$ ``sampling'' (i.e., predicting the most probable token).

\paragraph{The copy task.}
To define the copy task, we add two special tokens to $ \sD$: (1) \emph{beginning-of-sequence} token, denoted $\BOS$, and (2) \emph{copy} token, denoted $\COPY$. So now $ |\sD| = D + 2$.
A length-$L$ copy distribution $\gD_L$ over $\sD^{L+2}$ generates strings of the form: ``$\BOS, x_1, \dots, x_L, \COPY$'', where $\vx \in (\sD \setminus \{\BOS, \COPY\})^L$.

For some sequence-to-sequence model $H : \sD^* \to \sD^*$, we denote the error of $H$ on a copy distribution $\gD_L$ by
\[
    \err_{\gD_L}(H) = \Pr_{\gD_L}\left[H_{1:L}(\BOS, \vx, \COPY) \ne \vx\right]
\]
where $H_{1:L}(\cdot)$ denotes the first $L$ tokens generated by $H$. That is, we expect the model to output an exact copy of $\vx$.

\subsection{Transformers can copy inputs of exponential length}

In this section, we show that transformers can implement the copy operation for input sequences with length exponential in the number of heads. Namely, we construct a transformer with two blocks that gets small error on the copy task. 

\paragraph{Construction: hash-based copying.} The key idea in the construction is to first ``hash'' sequences of $n$ tokens ($n$-grams), then at each iteration of the auto-regression attend to the previous occurrence of the most recent $n$-gram, and output the succeeding token.
That is, we show that a transformer can implement the copying algorithm illustrated in \autoref{fig:copy_alg} (and see also Algorithm \ref{alg:copy} in the Appendix).

\paragraph{Positional embedding: Hard-ALiBi.}
To perform the hashing described in the algorithm, we need to be able to leverage local positional information to define a hash, and also to apply this hash function globally on the entire input.
To do this, we use a hard version of ALiBi \citep{press2021train}, which we call \emph{Hard-ALiBi}. Just as in ALiBi, we add a bias $ b_i $ to the $i$-th attention head as follows:
$\label{eq:hard-alibi} \vo_i = V_i \cdot \softmax(K_i \cdot \vq_i + b_i)$.
Specifically, we set $b_i$ s.t. $b_{i,j} = -\infty$ for $j \le i-m$ and $b_{i,j} = 0$  for $j > i-m$. We allow different heads with different choices of $m$ and also allow for $m=\infty$ which corresponds to softmax attention with no positional embedding. This is illustrated in Figure \ref{fig:hard_alibi} (Appendix). While the Hard-ALiBi is introduced  for our theoretical construction, we observe it also offers significant benefits empirically, as discussed in Section \ref{sec:learning}.

\begin{figure}[t]
    \centering \includegraphics[width=\linewidth]{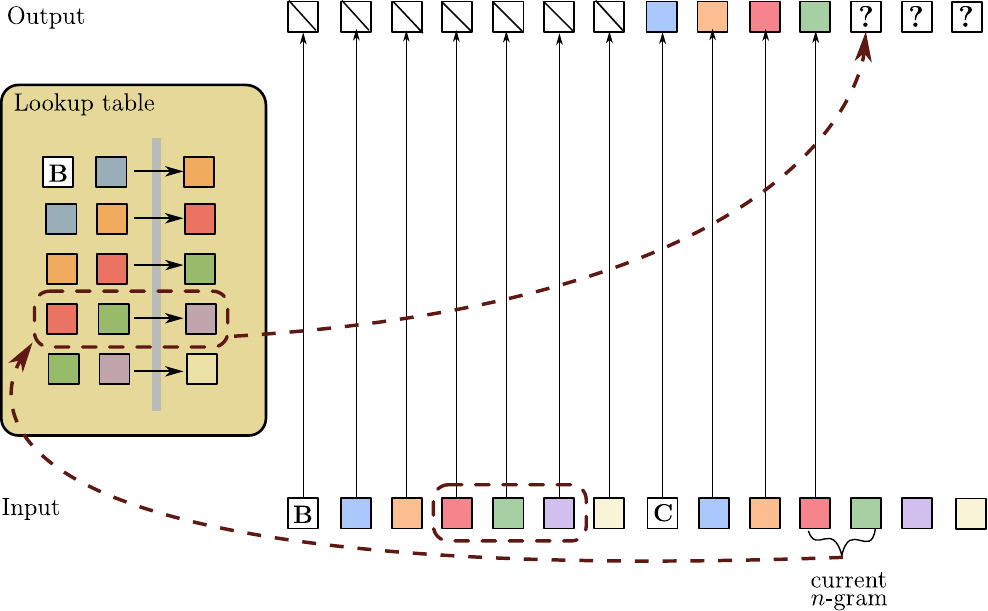}
    \vspace{-0.4cm}
    \centering
    \caption{An illustration of the $n$-gram based copy algorithm. In order to predict the next token, we match the current $n$-gram to the corresponding $n$-gram in the input, then output the next token.}
    \label{fig:copy_alg}
    \vspace{-0.2cm}
\end{figure}

\paragraph{Guarantees.}
The copy algorithm given in Algorithm \ref{alg:copy} (and similarly, our transformer construction) can perfectly copy the input sequence, as long as there are no repeated $n$-gram patterns in the input. Therefore, the error of the algorithm depends on the probability of repeated $n$-grams:

\begin{definition}
    Let $\gD_L$ be some copy distribution. For some $n \in \sN$, let $p_{\ngram}(\gD_L)$ be the probability that $x_1, \dots, x_L$ contains two repeated sequences of $n$ tokens. Namely:
    \begin{align*}
        p_{\ngram}(\gD_L) = \Pr_{\gD_L} \left[\exists_{i \ne j}~\mathrm{s.t.}~x_i, \dots x_{i+n} = x_{j}, \dots, x_{j+n}\right]
    \end{align*}
\end{definition}

Below we state the main theoretical result on copying with transformers, showing that transformers can copy their input, with error bounded by the probability of repeated $n$-grams:

\begin{theorem}
    \label{thm:upper_bound}
    For all $n$, there exists a depth-2 transformer $\gT$ of dimension $O(n\log(D))$ s.t. for all $2n \le L \le D^n$, and for any copy distribution $\gD_L$, $\err_{\gD_L}(\gT)< p_{\ngram}(\gD_L)$.
\end{theorem}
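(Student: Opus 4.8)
The plan is to prove the theorem by an explicit construction that realizes the hash-based copy algorithm of \autoref{fig:copy_alg}, followed by a reduction of the copy error to the probability of a repeated $n$-gram. I would fix a token embedding into $\reals^{O(\log D)}$ that is injective on $\sD$, with two reserved coordinates flagging $\BOS$ and $\COPY$, so that stacking $O(n)$ heads of this width yields a transformer of total dimension $O(n\log D)$. The two blocks then play the roles of ``hashing'' and ``lookup''.

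\emph{First block (local $n$-gram hashing).} I would use $n$ Hard-ALiBi heads with windows $m=1,\dots,n$ and constant attention scores, so that head $m$ outputs the uniform average $\vo^{(m)}_i$ of the last $m$ token embeddings. The MLP then inverts these prefix-averages by telescoping, recovering each individual embedding via $x_{i-m+1}\propto m\,\vo^{(m)}_i-(m-1)\,\vo^{(m-1)}_i$, and concatenates them into a vector $h_i$ that injectively encodes the length-$n$ window ending at position $i$. This is exactly where $L\le D^n$ enters: a window over $D$ content symbols has at most $D^n$ possible values, which is precisely what an $O(n\log D)$-bit representation can store without collisions, and it is the source of the ``exponential length'' claim, since $n$ heads suffice for length $D^n$.

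\emph{Second block (global match-and-copy).} I would use a single head with $m=\infty$ (global softmax). Setting the query at the current position to $h_t$, the key at position $i$ to the one-step-shifted hash (so that a match means the window ending just before $i$ equals the current window), and the value at $i$ to the embedding of $x_i$, a large score scaling makes the softmax concentrate on the unique matching position, so the head emits the token that followed the earlier occurrence of the current window --- exactly the next token to copy. The transition at $\COPY$ and the first few output tokens, whose current window straddles $\COPY$, I would handle with a dedicated branch anchored at $\BOS$: the $\COPY$ query attends to the position right after $\BOS$ and emits $x_1$, and the window mechanism takes over once a clean window of copied tokens is available. The condition $2n\le L$ guarantees there is room for this anchored start and a clean region for the inductive argument.

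\emph{Correctness and error bound.} I would define the good event as ``$x_1,\dots,x_L$ has no repeated window'', i.e. the complement of the $p_{\ngram}$ event. On this event every query in the second block has a unique match, so by induction on $j=1,\dots,L$ the model reproduces $x_j$ exactly and the error is $0$; hence $\err_{\gD_L}(\gT)\le p_{\ngram}(\gD_L)$. I expect two delicate points to be the real obstacles. The first is realizing an exact hard lookup with a finite-temperature softmax: I would bound the off-match attention mass by scaling the key--query inner products by a large constant and rely on the final $\argmax$ read-out to round to the correct token, so that a finite scale suffices on the good event. The second is upgrading $\le$ to the strict $<$: I would argue that the error event is a proper subset of the repeat event, since certain repeated windows are still copied correctly (for instance repeats absorbed by the $\BOS$-anchored start, or repeats whose two occurrences are followed by the same token), so that the containment is strict whenever a repeat can occur. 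This strictness step is the part I would expect to require the most care.
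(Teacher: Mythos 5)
Your proposal is correct and follows essentially the same route as the paper's proof: the same flag-coordinate embedding, the same bank of Hard-ALiBi windowed-average heads inverted by a telescoping MLP to reconstruct the $n$-gram as (shifted key, current-window query, token value), the same single global softmax head with large score scaling plus an $\argmax$ readout, the same $\BOS$/$\COPY$-anchored handling of the first $n$ output positions, and the same reduction of the error to the repeated-$n$-gram event. Your flagged concern about upgrading $\le$ to $<$ is reasonable care on your part; the paper's own argument only establishes perfection on the no-repeat event and does not belabor the strictness either.
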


Intuitively, the probability of repeated $n$-grams decays quickly when increasing the value of $n$. Indeed, we show that for the uniform distribution over sequences, this probability decays \emph{exponentially} with $n$:

\begin{lemma}
    \label{lem:qgram}
    Let $\gD_L$ be the copy distribution generated by sampling $\vx$ from the uniform distribution over the ``alphabet'' (non-special) tokens. Then, $p_{\ngram}(\gD_L) < L^2 D^{-n}$.
\end{lemma}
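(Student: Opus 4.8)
The plan is to bound $p_{\ngram}(\gD_L)$ by a union bound over all pairs of starting positions and to control the per-pair collision probability by a sequential-revelation argument. Concretely, write the bad event as $\bigcup_{i<j} A_{ij}$, where $A_{ij}$ is the event that the two length-$(n{+}1)$ windows starting at $i$ and $j$ coincide, i.e.\ $x_{i+k}=x_{j+k}$ for all $k=0,\dots,n$. Since each window must fit inside $x_1,\dots,x_L$, there are at most $\binom{L}{2} < L^2/2$ such pairs, so it suffices to show $\Pr_{\gD_L}[A_{ij}] = D^{-(n+1)}$ for every fixed pair $i<j$; the union bound then yields $p_{\ngram}(\gD_L) < \tfrac{L^2}{2} D^{-(n+1)} \le \tfrac{L^2}{2} D^{-n} < L^2 D^{-n}$, with room to spare.

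The key step --- and the place where one must be careful --- is the per-pair estimate, because when the windows overlap ($j-i \le n$) the relevant tokens are not independent, so the naive ``$n{+}1$ independent coincidences'' heuristic is not immediately justified. I would handle the overlapping and non-overlapping cases uniformly by revealing the tokens $x_1, x_2, \dots$ one at a time in increasing order of index. The point is that for every $k \in \{0,\dots,n\}$ the second-window position $j+k$ lies strictly to the right of its partner $i+k$, since $j>i$; hence at the moment we reveal $x_{j+k}$, the value $x_{i+k}$ has already been fixed. Because the tokens are i.i.d.\ uniform on the $D$ alphabet symbols, the conditional probability that $x_{j+k}$ equals this already-determined target is exactly $1/D$, regardless of the history. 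The positions $j, j+1, \dots, j+n$ are $n+1$ distinct indices, so multiplying these conditional probabilities gives $\Pr_{\gD_L}[A_{ij}] = (1/D)^{n+1} = D^{-(n+1)}$, even in the overlapping regime.

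Thus the only genuine obstacle --- the dependence among tokens in overlapping windows --- is dissolved by processing indices left to right, which turns the $n+1$ equality constraints into a sequence of conditionally independent $1/D$ events. Combining the per-pair bound with the pair count completes the proof, and I would remark that the argument is quite lossy (we discard a factor of $D$ in passing from $D^{-(n+1)}$ to $D^{-n}$, and a factor of $2$ from $\binom{L}{2}$ versus $L^2$), which is precisely why the clean bound $L^2 D^{-n}$ holds with strict inequality.
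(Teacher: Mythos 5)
Your proof is correct, and at the top level it follows the same decomposition as the paper: a union bound over the fewer than $L^2$ pairs of starting positions, reduced to an exact per-pair collision probability. Where you differ is in how that per-pair probability is established. The paper uses a counting argument: writing $I=\{i,\dots,i+n\}$ and $J=\{j,\dots,j+n\}$, it observes that the event $\vx_I=\vx_J$ forces $\vx_{I\cup J}$ to be determined by $\vx_{I\setminus J}$ (in the overlapping case this is the statement that the restriction to $I\cup J$ is periodic with period $j-i$), so the probability is $D^{\abs{I\setminus J}-\abs{I\cup J}}$. You instead reveal tokens left to right and use the chain rule: each constraint $x_{j+k}=x_{i+k}$ has conditional probability exactly $1/D$ because the target $x_{i+k}$ is already fixed when $x_{j+k}$ is drawn and all previously imposed constraints involve only earlier positions. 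Both arguments correctly dispose of the overlapping-window dependence, which is the only real content of the lemma, and both yield the same exact value $D^{-(n+1)}$ for the pair probability. (The paper's displayed evaluation of its own exponent as $D^{-n}$ rather than $D^{-(n+1)}$ is a harmless off-by-one, since the windows $\{i,\dots,i+n\}$ contain $n+1$ tokens; your $D^{-(n+1)}$ is the correct evaluation, and either value suffices for the stated bound.) Your sequential-revelation argument is arguably a bit more robust — it would extend verbatim to any product distribution where each coordinate has maximum point mass $1/D$ — whereas the paper's counting argument is slightly more self-contained for the uniform case.
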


Combining the above results, we get that transformers can copy sequences of tokens drawn from the uniform distribution, using a number of parameters that depends only \emph{logarithmically} on the input sequence length.

\begin{corollary}
    Fix some $\epsilon \in (0,1/2)$ and some $L \ge \Omega(\log(1/\epsilon))$. There exists a depth-2 transformer $\gT$ of dimension $O(\log(L/\epsilon)\log(D))$ s.t. for the uniform copy distribution $\gD_L$, $\err_{\gD_L}(\gT) < \epsilon$.
\end{corollary}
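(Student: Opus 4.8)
The plan is to combine \autoref{thm:upper_bound} and \autoref{lem:qgram} by choosing the $n$-gram length $n$ appropriately. Chaining the two results, for the uniform copy distribution $\gD_L$ and for every integer $n$ with $2n \le L \le D^n$ there is a depth-$2$ transformer $\gT$ of dimension $O(n\log D)$ satisfying
\[
    \err_{\gD_L}(\gT) < p_{\ngram}(\gD_L) < L^2 D^{-n},
\]
where the first inequality is \autoref{thm:upper_bound} and the second is \autoref{lem:qgram}. Hence it suffices to pick $n$ large enough that $L^2 D^{-n} \le \epsilon$, while keeping $n$ small enough that the claimed dimension bound and the admissibility constraint $2n \le L \le D^n$ are both met.

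Concretely, I would set $n := \lceil \log_D(L^2/\epsilon) \rceil$, so that $D^n \ge L^2/\epsilon$ and therefore $L^2 D^{-n} \le \epsilon$; combined with the display this yields $\err_{\gD_L}(\gT) < \epsilon$ (the strict inequality is inherited from the two cited bounds). For the dimension, note $n \le \log_D(L^2/\epsilon) + 1 = \frac{2\log L + \log(1/\epsilon)}{\log D} + 1$, so $n\log D = O(\log L + \log(1/\epsilon) + \log D) = O(\log(L/\epsilon)\log D)$, which is the advertised bound (in fact this choice already gives the tighter $O(\log(L/\epsilon))$, sitting comfortably inside the stated estimate).

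It remains to verify the admissibility constraint $2n \le L \le D^n$, and this is the only step requiring genuine care. The upper bound is immediate: $D^n \ge L^2/\epsilon > 2L^2 \ge L$, using $\epsilon < 1/2$ and $L \ge 1$. The lower bound $2n \le L$ is exactly where the hypothesis $L \ge \Omega(\log(1/\epsilon))$ is used. Since $\log D \ge \log 2$, we have $2n = O(\log L + \log(1/\epsilon))$; the $\log(1/\epsilon)$ contribution is dominated by $L/2$ precisely by the hypothesis on $L$ (after fixing the hidden constant large enough), and the $\log L$ contribution is $o(L)$ and hence dominated once $L$ exceeds an absolute constant, which we may fold into the $\Omega(\cdot)$. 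The main obstacle, such as it is, is thus purely this bookkeeping: ensuring both the $\log(1/\epsilon)$ and $\log L$ parts of $n$ stay below $L/2$. Everything else is direct substitution into \autoref{thm:upper_bound} and \autoref{lem:qgram}.
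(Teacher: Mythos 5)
Your proposal is correct and matches the paper's (implicit) argument exactly: the corollary is obtained by chaining Theorem~\ref{thm:upper_bound} with Lemma~\ref{lem:qgram} and choosing $n = \Theta(\log_D(L^2/\epsilon))$ so that $L^2 D^{-n} \le \epsilon$, with the hypothesis $L \ge \Omega(\log(1/\epsilon))$ serving precisely to guarantee the admissibility condition $2n \le L$. Your bookkeeping for the dimension bound and for $L \le D^n$ is sound, so there is nothing to add.
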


\begin{remark}
    For simplicity we do not limit the precision of the parameters or activations, but note that our results hold for finite-precision transormers, using $O(\log(\log(L)))$ bits.
\end{remark}

\subsection{State Space Models cannot copy inputs beyond memory size}

We saw that transformers are able to copy uniform sequences of tokens, with parameter count \emph{logarithmic} in the sequence length. We now show that GSSMs \emph{cannot} copy uniform input sequences, unless the capacity of their state space grows \emph{linearly} with the size of the sequence length. This is intuitive: to be able to copy the entire input sequence, the model needs to store it in its state space, which requires the memory to grow linearly with the sequence length.

\begin{theorem}
    \label{thm:lower_bound}
    Fix some GSSM $H$ over state space $\gS$. Then, for all $L$, for the uniform copy distribution $\gD_L$, the model $H$ has error $\err_{\gD_L}(H) > 1-\frac{\abs{\gS}}{D^L}$.
\end{theorem}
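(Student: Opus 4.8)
The plan is to exploit the fact that a GSSM compresses the entire prompt into a single state drawn from the finite set $\gS$, and then generates its whole answer as a \emph{deterministic} function of that one state. Since there are $D^{L}$ possible inputs under the uniform distribution but only $\abs{\gS}$ states, a counting (pigeonhole) argument will force most inputs to be confused.

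First I would record the two determinism observations that drive everything. After the model reads the full prompt $\BOS, x_1, \dots, x_L, \COPY$, its post-prompt state $S_{L+2}(\BOS, \vx, \COPY)$ depends only on $\vx$, because $\BOS$ and $\COPY$ are fixed tokens; write $\Phi(\vx)$ for this state, so $\Phi : (\sD \setminus \{\BOS,\COPY\})^{L} \to \gS$. Next, I would observe that the entire generated answer $H_{1:L}(\BOS,\vx,\COPY)$ is itself a deterministic function of $\Phi(\vx)$ alone: autoregressive generation produces the next token as $r$ applied to the current state, and the next state as $u$ applied to the current state and that emitted token, so unrolling $L$ steps yields a map $g : \gS \to \sD^{L}$ with $H_{1:L}(\BOS,\vx,\COPY) = g(\Phi(\vx))$.

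With these two maps in hand, the core step is a short injectivity argument. Let $C = \{\vx : g(\Phi(\vx)) = \vx\}$ be the set of inputs that are copied correctly. If $\vx, \vx' \in C$ share the same state, i.e. $\Phi(\vx) = \Phi(\vx')$, then $\vx = g(\Phi(\vx)) = g(\Phi(\vx')) = \vx'$; hence $\Phi$ is injective on $C$, so $\abs{C} \le \abs{\gS}$. Because $\gD_L$ is uniform over the $D^{L}$ possible strings $\vx$, the probability of a correct copy is $\abs{C}/D^{L} \le \abs{\gS}/D^{L}$, and therefore $\err_{\gD_L}(H) = 1 - \abs{C}/D^{L} \ge 1 - \abs{\gS}/D^{L}$, which is the claimed bound.

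There is no deep obstacle here; the entire point is the modeling observation that collapses the problem to counting. The step demanding the most care is the second one --- rigorously justifying that the full length-$L$ output is a function of the \emph{single} post-prompt state and carries no further input-dependent information --- since this is precisely where the fixed-size-memory constraint of a GSSM is used (a transformer, by contrast, may re-attend to the whole prompt at every generation step, which is what lets \autoref{thm:upper_bound} evade this bound). Everything after that is the pigeonhole count, and the uniform distribution is exactly what makes all $D^{L}$ inputs equally likely, turning the cardinality bound $\abs{C}\le\abs{\gS}$ directly into the stated error lower bound.
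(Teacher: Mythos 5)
Your proposal is correct and follows essentially the same route as the paper: the paper's Lemma on fixed-state models is exactly your observation that the output is a deterministic function $g$ of the post-prompt state, and the paper's counting step (summing the indicator $\1\{G(s)=\vx\}$ over the fibers of the state map, which contributes at most one correct input per state) is the same pigeonhole argument you phrase as injectivity of $\Phi$ on the set of correctly copied strings. The only cosmetic difference is that both your argument and the paper's actually establish $\err_{\gD_L}(H) \ge 1-\abs{\gS}/D^{L}$ rather than the strict inequality in the statement, so you are no worse off than the original.
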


Given Theorem \ref{thm:lower_bound}, the following Corollary is immediate:

\begin{corollary}
    \label{cor:lower_bound}
    Fix some $L \in \sN$. Then, every GSSM $H$ with state space $\gS$ s.t. $\mem(\gS) < L\log(D)-1$ has error $\err_{\gD_L}(H) > 1/2$ for the uniform copy distribution $\gD_L$.
\end{corollary}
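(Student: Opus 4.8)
The plan is to prove \autoref{thm:lower_bound} by a state-counting (pigeonhole) argument that exploits the determinism of autoregressive generation. The central observation is that once the GSSM has consumed the entire prompt $\BOS, \vx, \COPY$, all of its subsequent behavior is frozen into a single state. Concretely, let $s(\vx) := S_{L+2}(\BOS, x_1, \dots, x_L, \COPY) \in \gS$ be the state reached after reading the length-$(L+2)$ prompt. Because the model generates autoregressively by feeding its own outputs back through the deterministic update rule $u$ and readout $r$ — and we use $\argmax$ decoding, so nothing random enters — the first generated token is $r(s(\vx))$, the next is $r(u(s(\vx), r(s(\vx))))$, and so on. Hence the entire generated prefix $H_{1:L}(\BOS, \vx, \COPY)$ is a \emph{fixed} function of $s(\vx)$ alone; denote it $G(s(\vx))$ for some $G : \gS \to \sD^L$.

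The key steps, in order, are then: (i) formalize that the output map $\vx \mapsto H_{1:L}(\BOS,\vx,\COPY)$ factors as $G \circ s$, and therefore takes at most $\abs{\gS}$ distinct values; (ii) note that copying succeeds on $\vx$ exactly when $G(s(\vx)) = \vx$; and (iii) conclude that the success set $A = \{\vx : G(s(\vx)) = \vx\}$ injects into $\gS$. The injection in (iii) is the crux: if $\vx \ne \vx'$ both lie in $A$ yet $s(\vx) = s(\vx')$, then $G(s(\vx)) = G(s(\vx'))$ forces $\vx = \vx'$, a contradiction, so $\vx \mapsto s(\vx)$ is one-to-one on $A$ and $\abs{A} \le \abs{\gS}$. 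Since $\gD_L$ draws $\vx$ uniformly from the $D^L$ strings in $(\sD \setminus \{\BOS,\COPY\})^L$, the success probability is $\abs{A}/D^L \le \abs{\gS}/D^L$, giving $\err_{\gD_L}(H) = 1 - \abs{A}/D^L \ge 1 - \abs{\gS}/D^L$.

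The main conceptual obstacle — and essentially the only place care is needed — is step (i): one must verify that no information about $\vx$ can re-enter the computation after the prompt is read, so that the generated tokens depend on $\vx$ solely through $s(\vx)$. This is exactly where the fixed-size state is decisive: unlike a transformer, the GSSM cannot attend back to earlier inputs, and the recursive definition of $S_i, R_i$ makes the factoring through $s(\vx)$ immediate once it is written out. Everything else is routine arithmetic. The only gap between what this argument delivers ($\ge$) and the stated strict inequality ($>$) is a boundary subtlety: when $\abs{\gS} \ge D^L$ the bound $1 - \abs{\gS}/D^L$ is non-positive while $\err_{\gD_L}(H) \ge 0$, and otherwise the strict version follows from a one-line refinement of the pigeonhole count; in any case the corollary is unaffected.

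Finally, \autoref{cor:lower_bound} follows by substitution: $\mem(\gS) < L\log(D) - 1$ means $\abs{\gS} < D^L/2$, so $\abs{\gS}/D^L < 1/2$ and thus $\err_{\gD_L}(H) \ge 1 - \abs{\gS}/D^L > 1/2$.
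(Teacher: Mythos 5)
Your proof follows essentially the same route as the paper: the factoring of the post-prompt generation through the final state (the paper's Lemma~\ref{lem:fixed_state}), the pigeonhole count showing the success set injects into $\gS$ so that $\err_{\gD_L}(H) \ge 1 - \abs{\gS}/D^L$, and the substitution $\mem(\gS) < L\log(D)-1 \Rightarrow \abs{\gS} < D^L/2$ to conclude. Your remark that the counting argument as written yields only the non-strict bound in Theorem~\ref{thm:lower_bound} is a fair observation, and as you note the corollary's strict inequality is unaffected since $\abs{\gS}/D^L < 1/2$ holds strictly.
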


\begin{remark}
As mentioned previously, the input-dependent memory of transformers grows linearly with the sequence length, which is \emph{less} memory-efficient compared to GSSMs. However, it is interesting to note that from the above result, at least for the copy task, transformers are \emph{almost optimal} in terms of their input-dependent memory. More specifically, an implication of Theorem \ref{thm:upper_bound} is that there exists a transformer which can copy inputs of length $L$ using $\tilde{O}(L)$ input-dependent memory\footnote{We use $\tilde{O}$ to hide logarithmic factors.}, and due to Corollary \ref{cor:lower_bound} this is indeed optimal (up to logarithmic factors).
\end{remark}

\section{Learning to Copy}

\label{sec:learning}

 In the previous section, we proved that transformers can represent the copy operation for exponentially long sequences, while GSSMs fail to copy long sequences due to their limited memory. While these results show that in theory, transformers can outperform GSSMs, our theoretical results do not establish that such a gap will be observed in practice for two reasons. First, it is not clear that transformers can indeed \emph{learn} to copy from examples. Second, GSSMs in practice may use a large latent state memory, so that our bounds only hold for very long sequences of tokens. 
For example, a latent state of 1000 32-bit floating point numbers has enough bits to store at least 2000 tokens from a 50K token vocabulary. However, even though a GSSM could fit the context into memory, it may not learn to do so.

Our goal in this section is to verify that our theoretical analysis bears out experimentally when training models from scratch on synthetic data, before moving on to study pretrained models in the next section.
Specifically, we train transformers and GSSMs (LSTM \citep{hochreiter1997long} and Mamba \citep{gu2023mamba}) on variants of the copy task shown in \autoref{fig:copy_task}.

\subsection{Experimental setup}

We now provide a brief overview of our experimental setup. Further details may be found in \autoref{app:experiments}. Code and data available at:  \url{https://github.com/sjelassi/transformers_ssm_copy}

\paragraph{Architecture.} In all our experiments, we set the model hyperparameters so that the Mamba and transformers have a similar number of parameters ($\approx 160$ million parameters). Since we find that large LSTMs are hard to train (as confirmed in \citet{pascanu2013difficulty}), we use the largest LSTM we managed to train which has $\approx 40$ million parameters.

\paragraph{Dataset.} During training, we generate in an online manner a batch of 64 examples at each epoch. At test time, we evaluate our models on $10$ batches of $128$ examples. We report the mean and standard-deviation over these 10 batches. If not specified otherwise, our token space $\mathcal{V}$ is of size 30 and made of the alphabet letters i.e.\ $\mathcal{V}=\{a,\dots,z,\BOS,\EOS,\COPY\}$ where $\BOS$ is the beginning of sentence token, $\EOS$ the end of sentence token and $\COPY$ the separator token. All the strings are sampled uniformly i.e.\ we first sample the length of the sequence and then independently sample each position of the string from $\mathcal{V}$. Finally, we “pack the context” with i.i.d. sequences during training similarly to \cite{zhou2023algorithms}: we fill the context with multiple
independent samples of the task. 

\paragraph{Positional information.} 
Positional information also plays an important role in the length generalization capacity of Transformers \citep{jelassi2023length,kazemnejad2023impact,shen2023positional}. Previously popular methods of input-layer positional embeddings (e.g. sinusoidal \cite{vaswani2017attention} or learned \citep{Radford2019LanguageMA}) have been replaced by relative positional encodings at each attention layer (e.g. RoPE \citep{su2023roformer}, Alibi \citep{press2021train}, or NoPE \citep{kazemnejad2023impact}). Below, we experiment these positional encodings along with the Hard-Alibi encoding introduced in \autoref{sec:theory}.

\begin{figure}
    \centering
    \includegraphics[height=4.2cm]{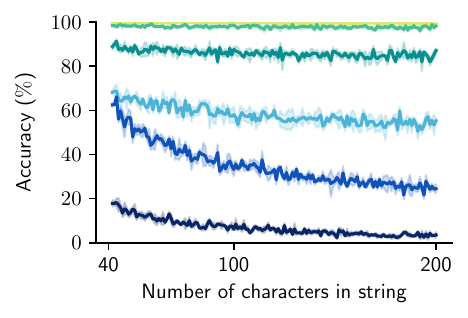}\\
    \centering
    \small
    n-gram length: {\small \cblock{255}{213}{0} \hspace{1mm}2\hspace{1.5mm}\cblock{220}{231}{117} \hspace{1mm}3\hspace{1.5mm} \cblock{72}{191}{145}\hspace{1mm}4\hspace{1.5mm}\cblock{8}{143}{143}\hspace{1mm}5\hspace{1.5mm}\cblock{77}{180}{215}\hspace{1mm}6\hspace{1.5mm}
    \cblock{15}{82}{186}\hspace{2mm}7\hspace{1.5mm}\cblock{8}{37}{103}\hspace{1mm}8}
    \vspace{-0.2cm}
    \caption{String-level copying accuracy on data with duplicated n-grams. Copying fails when the duplicated n-gram is too long as the model can no longer perform n-gram lookups.}
    \label{fig:n_gram_dup}
    \vspace{-0.3cm}
\end{figure}

\subsection{Data efficiency on the copy task}

We begin by training our models on the simple task of copying a sequence of input tokens described in \autoref{fig:copy_task}. The model gets an input of $ \leq L$ tokens followed by a \emph{Separator} ($\COPY$) token, and needs to output the same sequence again from the beginning. In this section, we focus on in-distribution learning: we train on strings of random length $ \leq L = 300$ and record the string-level accuracy on evaluation strings sampled from the training distribution.
Results for this experiment are shown in \autoref{fig:in_distribution_copy}. Clearly, there is a large gap between the transformers and GSSMs. We observe that the transformers need 100x less samples than the best GSSMs to learn the copy task. 

Note that the sharp changes in accuracy displayed in \autoref{fig:in_distribution_copy} are due to the log-scaled x-axis and choice of string-level accuracy as a metric. In \autoref{fig:char_level_id},  we report the character-level accuracy, which yields smoother curves demonstrating the learning process of GSSMs. Regarding LSTMs, we find that they do not manage to learn on length-300 strings even at the character level. In \autoref{fig:str_level_short}, we show that LSTMs are able to learn to copy on shorter strings and that string length is the bottleneck.

\subsection{Length generalization on the copy task}

The prior experiment demonstrates superior efficiency of learning in-distribution. Now, we test the ability of the learned functions to generalize out-of-distribution. 
Specifically, we consider generalization from short sequences to longer sequences. Testing this sort of generalization can help us to better understand which function the model has learned, i.e. whether the model has truly learned the ``correct'' copy operation or whether it just learned to copy sequences of the particular size it was trained on.

Here, we train all models on sequences of $\le 50$ tokens, and test them on sequences of up to $1000$ tokens, reporting string-level accuracy.
As seen in \autoref{fig:length_gen}, all models are able to (eventually) solve the task in-distribution on lengths of $ \leq 50$, but transformer-based models display much better generalization to longer inputs compared to GSSMs. 
Namely, we observe that the performance of the GSSMs (LSTM and MAMBA) drops to zero almost immediately when increasing the input length, while the performance of transformers decays much more gradually with length. 

\paragraph{Positional information.} When looking at the relative performance of different transformer models in \autoref{fig:length_gen}, it becomes clear that the positional encoding is important to length generalization.  Specifically, the ALiBi and NoPE transformers dramatically outperform the RoPE model on longer inputs. This is likely because the sinusoidal embeddings of RoPE create a more dramatic change than the decay of ALiBi or NoPE when we go to longer inputs.

\paragraph{Improved generalization with Hard-ALiBi.} 
To test our understanding of how transformers learn to copy, we now consider swapping in the Hard-ALiBi positional encoding that we used in our theoretical construction of hash-based copying (introduces in \autoref{eq:hard-alibi} and illustrated in \autoref{fig:hard-alibi-drawing} in the Appendix).  
\autoref{fig:length_gen} shows that a transformer trained with Hard-ALiBi embedding on sequences of length $\le 50$ achieves almost perfect length generalization up to sequences of length 1000. Note that this is well beyond the context length ever encountered in training.

\subsection{Transformers learn to use n-gram hashing}
Next, we attempt to determine whether the transformer trained on the copy task indeed applies the mechanism of storage and retrieval of n-grams. 
To do this, we evaluate the performance of a transformer with Hard-ALiBi positional encoding 
trained on the copy task when tested on a distribution of examples that intentionally contains duplicate n-grams. 
That is, we draw uniform sequences of tokens, and then randomly replace some n-gram with another n-gram that already appears in the sequence, such that each example always contains two copies of the same n-gram (typically followed by a different token). We use the Hard-Alibi model here since it performs the best for the copy task as showed in \autoref{fig:in_distribution_copy}.
\autoref{fig:n_gram_dup} shows the performance of the transformer for different choices of $n$. We observe that the transformer maintains roughly the same accuracy for $n\le 4$, but that its accuracy starts dropping when the inputs contains duplicate sequences of 5 or more tokens. 
This suggests that the transformer relies on something like 5-gram retrieval to do the copy task. \autoref{fig:n_gram_perf} further strengthens this point. We report the performance of perfect n-gram models in the copy task and observe that the performance of Transformers enhanced with Hard-ALiBi matches with the one of a 5-gram model.

\subsection{GSSMs cannot arbitrarily retrieve from context}

\begin{figure}[t]

\begin{subfigure}[t]{0.48\textwidth}%0.33

\centering
\includegraphics[width=0.9\linewidth]{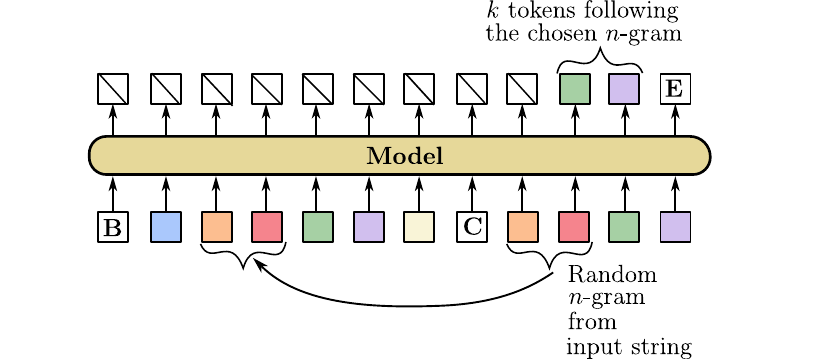}
\end{subfigure}
\begin{subfigure}[t]{0.48\textwidth}%0.33

\centering

\includegraphics[height=4.2cm]{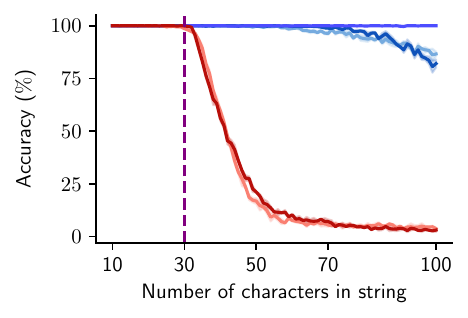}
\end{subfigure}
\small 
\centering
Transformer: \cblock{118}{171}{223} \hspace{0.5mm}NoPE\hspace{1mm} \cblock{15}{82}{186}\hspace{1mm}Alibi\hspace{1mm} \cblock{77}{77}{255}\hspace{1mm}HAlibi\\ GSSM: \cblock{250}{128}{114}\hspace{1mm}LSTM\hspace{1.5mm}\cblock{184}{15}{10}\hspace{1mm}Mamba
\vspace{-0.2cm}
\caption{\small\textbf{Top}: An illustration of the suffix key veriant of the n-gram lookup task.  \textbf{Bottom:} When trained on strings of length $ \leq 30$, transformers outperform GSSMs on longer inputs, illustrating superior performance on this memory-intensive task.}
\vspace{-0.2cm}
\label{fig:n_gram_completion}
\end{figure}

\begin{figure}[t]
\begin{subfigure}[t]{0.48\textwidth}%0.33

\centering
\includegraphics[width=0.9\linewidth]{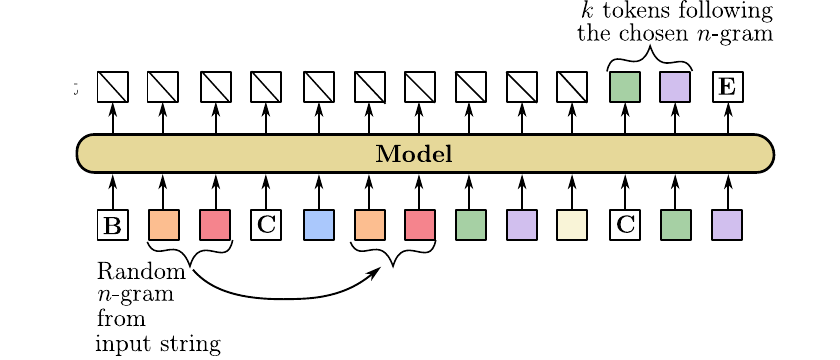}
\end{subfigure}
\begin{subfigure}[t]{0.48\textwidth}%0.33

\centering
\includegraphics[height=4.2cm]{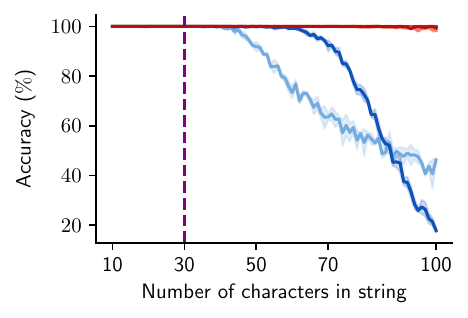}

\end{subfigure}

\small 
\centering
Transformer: \cblock{118}{171}{223} \hspace{0.5mm}NoPE\hspace{1mm} \cblock{15}{82}{186}\hspace{1mm}Alibi\hspace{1mm} \cblock{77}{77}{255}\hspace{1mm}HAlibi\\ GSSM: \cblock{250}{128}{114}\hspace{1mm}LSTM\hspace{1.5mm}\cblock{184}{15}{10}\hspace{1mm}Mamba
\vspace{-0.2cm}
\caption{\small\textbf{Top}: An illustration of the prefix key veriant of the n-gram lookup task.  \textbf{Bottom:} When trained on strings of length $ \leq 30$, GSSMs perform as well as the Hard-Alibi transformer and better than the other transformers. This slight variant of the task requires much less memory and is thus more suitable to the strengths of GSSMs at storing a small state over time.}
\label{fig:n_gram_completion_prefix}
\vspace{-0.2cm}
\end{figure}

We now introduce another task to probe the mechanisms that the models use to copy from the context: the \emph{n-gram lookup} task. In this task the model needs to use a given n-gram as a key to look up the k-token key that follows the query. We consider two variants of the task: \emph{suffix keys} and \emph{prefix keys}. In both variants, we assess length generalization to understand the function that the models have learned.

First, we consider the suffix key version of n-gram lookup.
In this task, the model is given a sequence $L$ of input tokens, a separator, and then an n-gram from the input sequence. 
The model then needs to output a sequence of $k$ tokens following the chosen n-gram (see \autoref{fig:n_gram_completion} for an illustration). This task is closely related to induction heads \citep{olsson2022context}. This task requires the model to be able to ``store'' the entire context in order to effectively find the correct key to access it's query.
We train all models on sequences of at most 30 tokens and show results in \autoref{fig:n_gram_completion}. Transformers perform well on this task, with a relatively small drop in performance when increasing the sequence length up to 100. This suggests that transformers can learn to perform n-gram storage and retrieval. 
GSSMs, however, perform poorly beyond their training distribution. 
Intuitively, this task still requires the models to store the entire input sequence, something that GSSMs struggle to do. 

Next, we try the prefix key version of n-gram lookup. Here we provide the n-gram key at the beginning and then the full input sequence (illustrated in \autoref{fig:n_gram_completion_prefix}).
In this version of the task the model does not need to store the entire input since it can look for the key on the fly as the sequence is processed.
This is good for the GSSMs, since they can write the key into the state and then ignore inputs that do not match.
Indeed, GSSMs achieve perfect length-generalization on this variant.
Interestingly, the GSSMs even outperform the NoPE and ALiBi transformers (although not the Hard-Alibi model). 
We hypothesize that this may be an issue where these positional embeddings make it more difficult to effectively perform the hashing lookup over a long distance in relative positions.
Taken together, these results illustrate how GSSMs seem to be memory limited, but can be effective when the tasks only require a summary of the inputs rather than storing the entire context.

\section{Pre-trained Models}\label{sec:pretrained}

In this section, we compare the performance of pre-trained transformers and pre-trained GSSMs on memory-intensive tasks such as copying long strings, retrieval and few-shot question answering. We show that transformers outperform GSSMs of similar scale on such memory-intensive tasks, even when the GSSM has lower perplexity as a language model. These results confirm that the limitation of GSSMs raised in previous sections apply to large scale models trained on real pretraining data.

\subsection{Setup}

In the experiments below, we compare Pythia transformer models \cite{biderman2023pythia} of sizes ranging from 410M to 2.8B against Mamba models \cite{gu2023mamba} of similar sizes. All these models have been pre-trained on the Pile \cite{gao2020pile} and use the same tokenizer. The Mamba models generally have slightly lower perplexity on the training set for a given size. The main difference between the Pythia and the Mamba models is their architectural design.

We compare these models by measuring their performance while varying the input instance length and consider two types of tasks: copy-based and information retrieval tasks. The copy-based tasks consist of presenting a random text to the model and asking it to copy the text. In the information retrieval tasks, we provide a text to the model and ask it a related question. These retrieval tasks can be seen as ``selective copy'', since the model needs to copy a small chunk of the input text in order to respond to the question. To measure performance, we use the string-level accuracy in all the experiments except in \autoref{fig:squad} where we consider question answering and thus report the F1 score.
We evaluate the models over 10 batches of size 64 for all the tasks except for question answering where we evaluate over 50 questions because the number of questions with a given context length is limited. Further details are in \autoref{app:experiments}.

\begin{figure*}[t]
\centering
\begin{subfigure}[t]{0.31\textwidth}
\centering
\vskip 0pt
    \includegraphics[height=3.8cm]{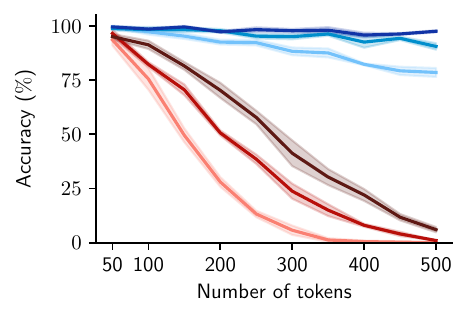}
    \small
    Pythia: \cblock{115}{194}{251}\hspace{1mm}410M
\hspace{1mm}\cblock{0}{142}{204}\hspace{1mm}1.4B
\hspace{1mm}\cblock{17}{52}{166}\hspace{1mm}2.8B \\
Mamba:\hspace{1mm}\cblock{250}{128}{114}\hspace{1mm}360M
\hspace*{1mm}\cblock{184}{15}{10}\hspace{1mm}1.4B\hspace{1mm}\cblock{94}{25}{20}\hspace{1mm}2.8B
    \caption{Copy: natural language strings}
    \label{fig:copy_c4}
\end{subfigure}
\hfill
\begin{subfigure}[t]{0.31\textwidth}%0.33
\centering
\vskip 0pt
\includegraphics[height=3.8cm]{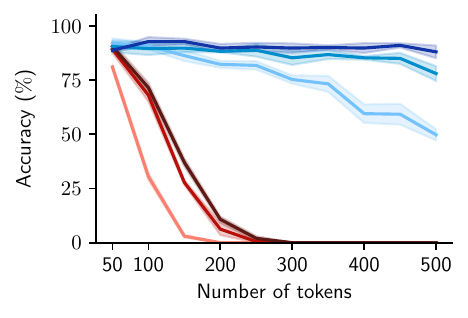}
\centering
\small 
Pythia: \cblock{115}{194}{251}\hspace{1mm}410M
\hspace{1mm}\cblock{0}{142}{204}\hspace{1mm}1.4B
\hspace{1mm}\cblock{17}{52}{166}\hspace{1mm}2.8B \\
Mamba:\hspace{1mm}\cblock{250}{128}{114}\hspace{1mm}360M
\hspace*{1mm}\cblock{184}{15}{10}\hspace{1mm}1.4B\hspace{1mm}\cblock{94}{25}{20}\hspace{1mm}2.8B
\caption{Copy: shuffled strings}
\label{fig:copy_shuffle}
\end{subfigure}
\hfill
\begin{subfigure}[t]
{0.31\textwidth}
\centering
\vskip 0pt
\includegraphics[height=3.8cm]{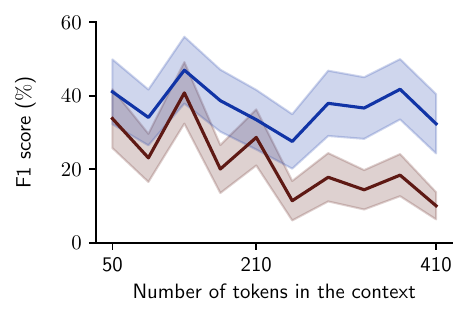}
\small
Pythia: \cblock{17}{52}{166}\hspace{1mm}2.8B\\
Mamba: \cblock{94}{25}{20}\hspace{1mm}2.8B
\caption{Question answering (SQUAD)}
\label{fig:squad}
\end{subfigure}

\centering

\small
\caption{\textbf{(a) Copy: natural language strings.} We compare pretrained models on their ability to copy natural language strings sampled from C4 of varying lengths and report string-level accuracy. The transformer models substantially outperform the GSSMs.  
\textbf{(b) Copy: shuffled strings.} To test whether it mattered that the strings were in natural language, we randomly shuffle the word order of the strings from the previous experiment. We find that this degrades performance, especially for the Mamba models. 
\textbf{(c) Question answering (SQUAD).} We compare Pythia and Mamba on a standard question answering dataset where we bin the dataset based on the length of the context paragraph. We find that Mamba performance decays more quickly with the length of the context.}
\end{figure*}

\subsection{Copying the input context}\label{subsec:copy_pretrained}

We first observe that pre-trained transformers outperform pre-trained GSSMs at copying long natural language strings. In \autoref{fig:copy_c4}, we randomly sample strings from  the C4 dataset \cite{raffel2020exploring} with varying number of tokens. Our prompt consists of two copies of the sampled string plus the first word of the string and we expect the model to complete the third copy. Even the smallest transformer model dramatically outperforms the largest GSSM. This happens even though the large GSSMs have enough bits in the state variable to potentially store the context. This confirms the idea that this is an architectual bias of transformers that makes it easier for them to copy from the context.

Unlike strings of tokens sampled uniformly at random, natural text can often be compressed, possibly allowing language models to copy longer strings even with limited memory. To test whether this matters, in \autoref{fig:copy_shuffle} we conduct the same experiment as above but randomly shuffle the order of the words in the strings. We find that when we shuffle the words, both GSSMs and transformers perform worse on the task, but the effect is more stark for GSSMs. Even the largest GSSM now gets zero accuracy on strings of length 300. This suggests that when the input is more difficult to compress, the GSSM suffers due to its fixed size state.

\subsection{Retrieval from the input context}

While copying provides a clear task to separate the model classes, it is not a particularly realistic task. That said, it presents an extreme case of a type of behavior that is highly relevant for many tasks of interest. In particular, many tasks require retrieving specific information from the context that is relevant to the desired output. 
This subsection presents examples of how our results transfer to more practical tasks.

\paragraph{Phone-book lookup.} We first consider a ``phone-book'' experiment where we provide a synthetic phone-book to the model and ask it to return the phone number when given a name. We generate the phone-book by randomly sampling $L$ names and their associated phone number. One line of this phone-book looks like ``John Powell: 609-323-7777". Our prompt to the model consists of the phone-book, two few-shot examples and a question asking for the phone number of a randomly sampled name from the phone-book. \autoref{fig:phonebook}
 reports the accuracy obtained by the pretrained transformers and GSSMs while varying the size of the phone-book $L.$ We observe that even the smallest transformer (410M parameters) outperforms the largest GSSMs (2.8B parameters) when the phone-book size is long enough ($L\geq 70$).
This shows that in retrieval tasks which require access to the whole context, GSSMs struggle to store the relevant information in their fixed-size state.

\paragraph{Question-Answering.} In this experiment, we compare the 2.8B parameter Mamba and transformer models\footnote{In our experiments, smaller models were unable to achieve reasonable and consistent performance on this dataset.}, on the SQuAD question-answering dataset \cite{rajpurkar2018know}. This dataset provides text paragraphs together with a few questions regarding the text. We probe the models to answer the question by providing a single demonstration of a question/answer pair (corresponding to the same text) before giving the target question. We bin the paragraphs according to their lengths, and report the F1 score as a function of the paragraph length for both models in \autoref{fig:squad}.
We observe that while for short paragraphs, both the Pythia transformer and Mamba achieve comparable performance, the performance of Mamba degrades more quickly with the paragraph length, while the transformer-based model maintains a similar accuracy even for longer texts.
This result shows that the fixed-memory of GSSMs also limits their performance on standard natural tasks.

\section{Related Work}

There exists a broad body of prior work on the representational capacity of GSSMs like RNNs 
\citep{merrill2019sequential, merrill2020formal} as well as transformers \citep{weiss2021thinking, merril2022tacl, wei2022statistically, sanford2023representational, edelman2022inductive}. 
Previous works that study transformers do so through comparison to other complexity classes, such as threshold circuits \cite{merril2022tacl}, RASP language \cite{weiss2021thinking} or first-order logic \cite{chiang2023tighter} (see \citet{strobl2023transformers} for a thorough review). 
These works do not provide insights into how transformers implement algorithms for solving specific problems. 
In contrast, our theoretical result constructs a transformer for the copy task, which illustrates the mechanism and provides tight bounds on the model size. 
Together with the result showing that GSSMs cannot copy long sequences, our theory characterizes the power of different sequence models on the copy task. 
Other theoretical separation results between transformers and RNNs \cite{sanford2023representational, merrill2019sequential} use more complex tasks of less practical relevance.

Other papers have previously demonstrated the capacity of transformers to leverage the entire input context for tasks like retrieval, question answering, and in-context learning \citep{devlin2018bert, raffel2020exploring, petroni2020context, brown2020language, liu2023lost,needlehaystack}. Another line of work has studied the ``induction head'' mechanism in transformers that performs a retrieval operation much like the one we observe for copying \citep{olsson2022context}.
But, to our knowledge, there is not a comparison in related work between transformers and GSSMs of similar quality on these tasks.

Several of our experiments study length generalization as a way to assess whether the model found the ``right way'' to solve the task. 
Prior work on length generalization in transformers has focused on the data distribution \citep{anil2022exploring}, positional embeddings \citep{kazemnejad2023impact}, and arithmetic tasks \citep{deletang2022neural,ruoss2023randomized,jelassi2023length,zhou2023algorithms}. We extend many of these ideas to the copying task.

Finally, we note that while we focus on tasks where transformers outperform GSSMs, there are also tasks where GSSMs outperform transformers. For example, 
\citet{liu2023exposing} shows that transformers fail to generalize out of distribution for ``flip-flop language modeling'', while LSTMs do so easily. These tasks require tracking a small $ O(1)$ state variable over time. Another benefit of GSSMs is the ability to input long contexts like DNA sequences that may be impractical for transformers \citep{nguyen2023hyenadna}. 

Concurrently to our work, \citet{akyurek2024context, grazzi2024mamba, park2024can} studied the difference between Transformers and Mamba at in-context learning, which can be seen as a form of copying. In particular, \citet{akyurek2024context} finds that Transformers have an advantage over other architectures at this task because they have ``n-gram heads''. Similarly to these works, we hint the limitations of SSMs in memory-intensive tasks such as copying because of their limited state size. We also show that Transformers can perform copying using the Hard-ALiBi positional encoding, which improves the model's ability to learn $n$-gram matching.

\section{Discussion}

We have demonstrated through theory and experiments that transformers are better than GSSMs at copying from their input context. However, we emphasize that state space models have many advantages over transformers. The memory and computational complexity of GSSMs does not increase with the input length, which is ideal for training and inference on long inputs. Additionally, state space models such as RNNs are better at tracking state variables across long sequences \cite{liu2023exposing}, which may be useful for generating long consistent text. Importantly, language processing in the human brain appears to be much more similar to how state space models process language \cite{tikochinski2024incremental}.

We therefore believe that future work should focus on building hybrid architectures that endow state space models with an attention-like mechanism, allowing them to retrieve relevant pieces of text from their input. Indeed, humans have an incredibly limited capacity for memorizing sequences \cite{miller1956magic}, but can translate entire novels if we allow them to look back at the text \cite{SheltonQuixote}.

\subsection*{Impact Statement}

This paper presents work whose goal is to advance the field of Machine Learning. There are many potential societal consequences of our work, none which we feel must be specifically highlighted here.

\section*{Acknowledgements}

We thank Boaz Barak for helpful discussions. Kempner Institute computing
resources enabled this work. Samy Jelassi acknowledges funding supported by the Center of Mathematical Sciences and Applications. This work has been made possible in part by a gift from the Chan Zuckerberg Initiative Foundation to establish the Kempner Institute for the Study of Natural and Artificial Intelligence. Sham Kakade acknowledges funding from the Office of Naval Research under award N00014-22-1-2377.

\bibliography{icml2024/references}
\bibliographystyle{icml2024}
\vfill
\appendix
\onecolumn

\section{Experimental setup }\label{app:experiments}

In this section, we provide additional details about our experimental setup. We first give a description of the positional encodings used in our transformers experiments (\autoref{app:transformers}) and then give details about the training and evaluation procedures (\autoref{app:train_eval_details}).

\subsection{Positional encodings in the transformers}\label{app:transformers}

\begin{figure}[h!]
\centering

\begin{subfigure}{0.35\textwidth}\includegraphics[width=1.\linewidth]{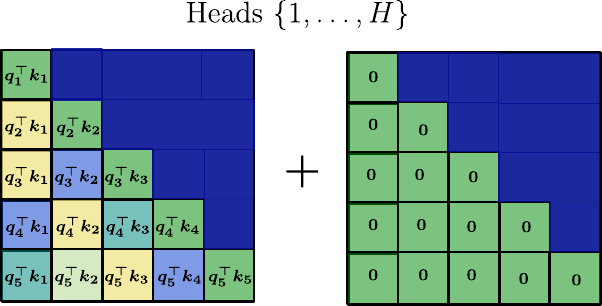}
 
 \caption{}\label{fig:nope}
\end{subfigure}
\hspace{2cm}
\begin{subfigure}{0.35\textwidth}
 \includegraphics[width=1.1\linewidth]{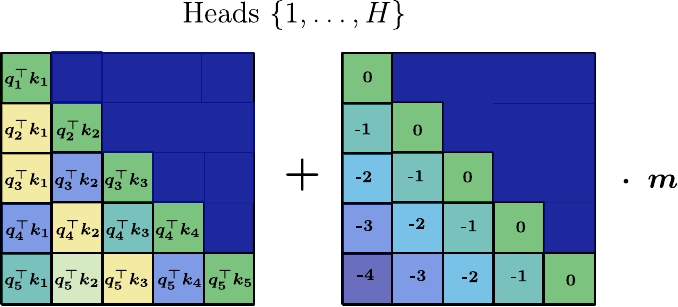}

 \caption{}\label{fig:alibi}
\end{subfigure}
\hfill

\begin{subfigure}{0.685\textwidth}

   \includegraphics[width=1.\linewidth]{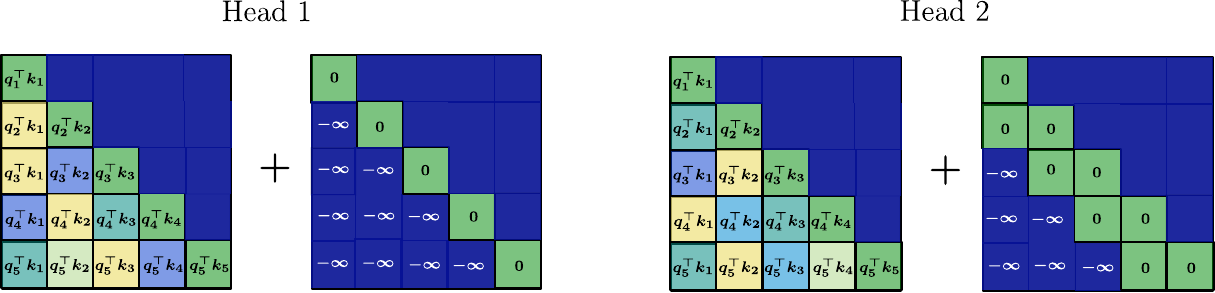}
   \caption*{}
\end{subfigure}\\
\begin{subfigure}{0.685\textwidth}

   \includegraphics[width=1.\linewidth]{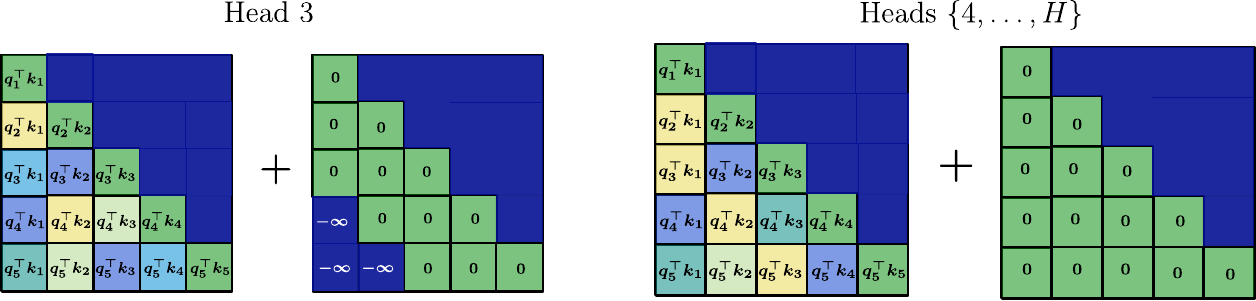}
   \caption{}\label{fig:hard_alibi}
\end{subfigure}
\caption{\small \textbf{Positional encoding schemes for transformers}: illustration of the different positional encodings of the transformers that are trained in our experiments. (a) corresponds to the NoPE encoding \cite{kazemnejad2023impact} where no positional encoding is applied to any of the attention heads (b) depicts the ALiBi encoding \cite{press2021train} where $m$ is a head-specific scalar and (c) the Hard-ALiBi encoding introduced in \autoref{sec:theory}. For the sake of illustration, we consider the case where we mask three heads which means that we force Heads 1, 2 and 3 to attend to their current token, their current and preceding tokens and their current, preceding and prior to the preceding tokens. The remaining heads are set as NoPE heads.}
\label{fig:hard-alibi-drawing}
\end{figure}

 We consider multiple positional encoding schemes in our experiments in \autoref{sec:learning}:

\begin{itemize}
\item[--] the NoPE scheme \cite{kazemnejad2023impact} where no positional information is added to any of the attention scores (\autoref{fig:nope}). This architecture choice helps to get better length generalization in multiple tasks including the copy task.
\item[--] the ALiBi scheme \cite{press2021train} which biases the attention scores with a penalty
that is proportional to their distance (\autoref{fig:alibi}). $m$ is a head-specific slope fixed before training.
\item[--] the Hard-ALiBi scheme introduced in \autoref{sec:theory} which has $M$ masked attention heads where we explicitly force the model to attend to their directly previous tokens and $H-M$ heads set to be NoPE attention heads. In \autoref{fig:hard_alibi}, we display the case where we have $M=4$ masked heads: in the first head, the tokens just attend to themselves; in the second head, the tokens attend to themselves and to previous ones; in the third head, the tokens attend to themselves, the previous ones and the second preceding tokens. The remaining $H-M$ heads are set to NoPE.
\end{itemize}

\subsection{Pretraining and evaluation details}\label{app:train_eval_details}

\paragraph{Software dependencies.} We implement all of our training in Pytorch \citep{paszke2019pytorch}. We use the HuggingFace library \citep{wolf2019huggingface} and the Mamba GitHub repository \citep{gu2023mamba}.  

\paragraph{Architectures.} In our experiments in \autoref{sec:learning}, the backbone of our transformers is the GPT-NeoX architecture. We set the number of layers to 12, the hidden size to 1024 and the number of heads $H=16$. We consider the different positional encodings that are described in \autoref{app:transformers}. For Alibi, we set the head-specific scalar as in the original paper i.e.\ $m_h=2^{-h/2}$ for $h\in\{1,\dots,H\}.$ For the Hard-Alibi model, we sweep over the number of masked heads $M\in \{2,\dots,10\}$ and found that the best model corresponds to $M=6$. Regarding the Mamba models, we set the number of layers to 24 and the hidden size 1024. We also sweep over the state space dimension $S\in\{16,32,64,128,256\}$ and found the best model is $S=32$. This choice of hyperparameters ensures that both transformers and Mamba models have a comparable number of parameters. Lastly, our LSTM is made of 4 layers and width 1024.

\paragraph{Training hyperparameters.} In \autoref{sec:learning}, at each epoch, we sample online a batch size of size 64. We fill the context with examples so we choose a context length ($C=420$ for all the experiments except \autoref{fig:in_distribution_copy} where we set $C=620$)  and pack as many examples as possible to fit this context. So in our case, one sample contains many instances. We run the experiments for 15 epochs for both transformers and Mamba while for LSTMs we need 300 epochs. All methods are trained with the AdamW optimizer \cite{loshchilov2017decoupled} with learning rate 5e-5, a linear rate decay schedule, 300 steps of warmup and default weight decay of 1e-1. For LSTMs and Mamba, we did a sweep over learning rates $\{5\mathrm{e}-5,1\mathrm{e}-4,5\mathrm{e}-4\}.$ Finally, to train all the models, we use the next-token prediction loss but we apply a mask on the input instance so that we only penalize the model whenever it makes a mistake on the labels (and not on the inputs and labels jointly).

\paragraph{Compute resources.} Pretraining was all done on an internal cluster using RTX8000 GPUs. We estimate that the final training run needed to produce the results in the paper took approximately 600 GPU hours. 

\paragraph{Evaluation algorithm.} We evaluate the models over 10 batches of size 64 for all the tasks except for the question answering one where we evaluate over 50 questions because the number of questions with a given context length is limited.

\paragraph{Decoding algorithm.} At inference, all our models use greedy decoding for generation and we set the temperature to 0.

\section{Additional Experiments}

 In \autoref{app:data_efficiency}, we focus on the in-distribution learning of the copy task and show that the number of samples needed by GSSMs is much higher than the one for transformers. In \autoref{app:pretrained_copy_uniform}, we study the performance of pre-trained models on the copy task in the case where the strings are sampled uniformly. This experiment shows that when the text to copy is totally random, the gap between pre-trained transformers and GSSMs is even larger.

\subsection{Data efficiency on the copy task}\label{app:data_efficiency}

\begin{figure*}[h!]
\centering
\begin{subfigure}[h!]{0.31\textwidth}
\centering
\vskip 0pt
    \includegraphics[height=3.8cm]{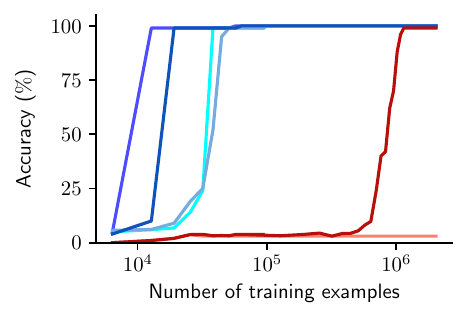}
    \caption{}
    \label{fig:char_level_id}
\end{subfigure}
\hfill
\begin{subfigure}[h!]
{0.31\textwidth}
\centering
\vskip 0pt
\includegraphics[height=3.8cm]{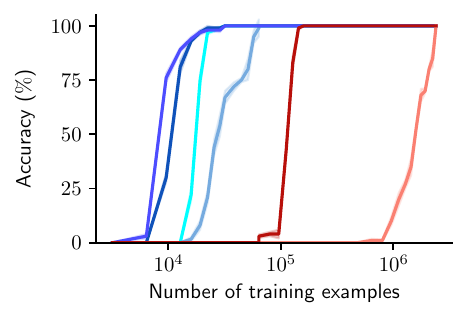}
\caption{}
\label{fig:str_level_short}
\end{subfigure}
\hfill
\begin{subfigure}[h!]
{0.31\textwidth}
\centering
\vskip 0pt
\includegraphics[height=3.8cm]{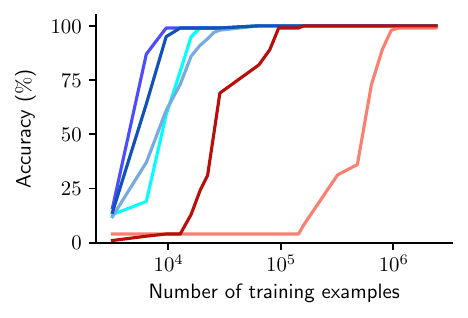}
\caption{}
\label{fig:char_level_short}
\end{subfigure}

\vspace{.3cm}

\small 
%\hspace*{-1.8cm}Transformer: 
Transformer: \cblock{0}{255}{255} \hspace{0.5mm}RoPE\hspace{1mm} \cblock{118}{171}{223} \hspace{0.5mm}NoPE\hspace{1mm} \cblock{15}{82}{186}\hspace{1mm}Alibi\hspace{1mm} \cblock{77}{77}{255}\hspace{1mm}HAlibi\\ GSSM:
\cblock{250}{128}{114}\hspace{1mm}LSTM\hspace{1.5mm}\cblock{184}{15}{10}\hspace{1mm}Mamba
\caption{\textbf{(a) Copying long strings: character-level accuracy.} Here we train models to copy strings of length $ \leq 300$ and evaluate character-level accuracy on strings of length 300. Transformers train much faster than GSSMs. Mamba has a more progressive learning curve than in \autoref{fig:in_distribution_copy}. An LSTM cannot even learn the task within this number of samples at the character level. 
\textbf{(b) Copying short strings: string-level accuracy.} Here we train models to copy strings of length $ \leq 30$ and evaluate character-level accuracy on strings of length 30. Transformers train much faster than GSSMs. Compared to \autoref{fig:in_distribution_copy}, we see that Mamba needs way less samples in order to learn to copy length-30 strings. An LSTM can learn to copy but requires 100x more  training examples.
\textbf{(c) Copying short strings: character-level accuracy.}  Here we train models to copy strings of length $ \leq 30$ and evaluate character-level accuracy on strings of length 30 and report the character-level accuracy.}
\end{figure*}

In this section, we provide additional plots to complement the data efficiency experiment from \autoref{fig:in_distribution_copy}. We want to highlight the following points: 
\begin{itemize}
\item[--] in \autoref{fig:in_distribution_copy}, we  see a sharp transition for the Mamba learning curve. However, \autoref{fig:char_level_id} shows that the learning process is more smooth at the character level. Besides, LSTMs are not able to learn the copy on length-300 strings even at the character level. 
\item[--] We consider the experiment of learning to copy much shorter strings namely strings with length $\leq 30$. \autoref{fig:str_level_short} shows that the gap in terms of training examples between transformers and Mamba is much smaller i.e.\ Mamba only needs 10x more data. Besides, we see that the LSTM is able to learn the copy task but it needs 100x more data than transformers.
\end{itemize}

\subsection{Pre-trained models on the uniform copy task}\label{app:pretrained_copy_uniform}

In this section, we provide an additional experiment that shows the superiority of pre-trained Pythia over pre-trained Mamba models in the copy task.

\begin{wrapfigure}[14]{r}{0.45\textwidth}

\vspace{-.7cm}

\includegraphics[height=4.2cm]{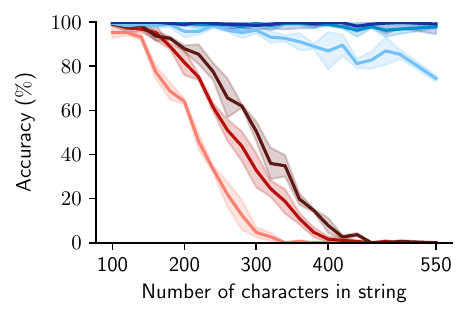}
\\
\centering
\small
Pythia: \cblock{115}{194}{251}\hspace{1mm}410M
\hspace{1mm}\cblock{0}{142}{204}\hspace{1mm}1.4B
\hspace{1mm}\cblock{17}{52}{166}\hspace{1mm}2.8B\\
Mamba:\hspace{1mm}\cblock{250}{128}{114}\hspace{1mm}360M
\hspace*{1mm}\cblock{184}{15}{10}\hspace{1mm}1.4B\hspace{1mm}\cblock{94}{25}{20}\hspace{1mm}2.8B
\caption{\small \textbf{Copy: uniform strings.} To test whether it mattered that the strings were in natural language, we generate uniformly sampled strings (the generation process is described in \autoref{sec:learning}). We find that this degrades the Mamba models while Pythia models are able to keep a high performance. }\label{fig:pretrained} %$\{\langl}\label{fig:pos_intro}
\end{wrapfigure}

We consider the same setup as in \autoref{sec:learning}: we sample uniform strings of alphabet characters with a fixed length and ask the model to copy it by using the same prompt format as the one described in \autoref{subsec:copy_pretrained}.

This setting is a more extreme version of \autoref{fig:copy_shuffle} since the strings are more random: in  \autoref{fig:copy_shuffle}, the order of the nouns were random but the nouns were English nouns while in \autoref{fig:copy_shuffle}, the strings are totally random. In \autoref{fig:pretrained}, we see a clear separation between the transformers and Mamba models with the smallest Pythia outperforming the largest Mamba. However, compared to \autoref{fig:copy_shuffle}, the Pythia performance is much higher since the 1.4B model able to get almost 100\% accuracy.

\vspace{1cm}
%\subsection{Comparison of Transformers enhanced with Hard-ALiBi and $n$-gram models}

\subsection{Performance $n$-gram models at copying}

\begin{wrapfigure}[14]{r}{0.45\textwidth}

\vspace*{-1.3cm}
    \includegraphics[height=4.2cm]{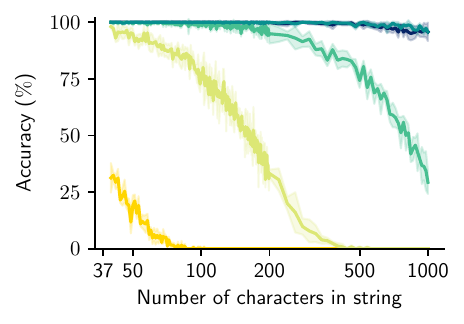}\\
    \centering
    \small
    n-gram length: {\small \cblock{255}{213}{0} \hspace{1mm}2\hspace{1.5mm}\cblock{220}{231}{117} \hspace{1mm}3\hspace{1.5mm} \cblock{72}{191}{145}\hspace{1mm}4\hspace{1.5mm}\cblock{8}{143}{143}\hspace{1mm}5}\\
    \hspace{-.5cm}Transformers: {\small \cblock{8}{37}{103} \hspace{1mm}Hard-ALiBi}
    \vspace{-0.2cm}
    \caption{String-level copying accuracy obtained by perfect $n$-gram models and Transformers with Hard-ALiBi. Transformers performance matches the one of 5-gram model.}
    \label{fig:n_gram_perf}
    \vspace{-0.3cm}
\end{wrapfigure}

In \autoref{fig:n_gram_perf}, we display the performance of perfect $n$-gram models in the copy task. To obtain these curves, we uniformly sample 128 strings over 3 seeds and report the probability there is a $n$-gram. This probability corresponds to the performance of a perfect $n$-gram model. We observe that Transformers enhanced with the Hard-ALiBi positional encoding have a performance close to a perfect $5$-gram model.

\newpage
\section{Proofs - Upper Bound}

This section gives a detailed proof of Theorem \ref{thm:upper_bound} and Lemma \ref{lem:qgram}.

\subsection{Technical Lemmas}
We begin by introducing some technical lemmas that we use in the proof of Theorem \ref{thm:upper_bound}.

\begin{algorithm}[tb]
   \caption{Hash-based copying}
   \label{alg:copy}
\begin{algorithmic}
   \STATE {\bfseries Input:} sequence $x_1, \dots, x_L$
   \STATE Let $s: \sD^* \to \reals^d$ be some hashing function.
   \FOR{$i=n+2, \dots, L$}
   \STATE $k_i \leftarrow s(x_{i-n}, x_{i-n+1}, \dots, x_{i-1})$
   \STATE $v_i \leftarrow x_i$
   \ENDFOR
   \FOR{$j=1, \dots, L$}
   \IF{$j \le n$}
   \STATE $y_j \leftarrow x_j$
   \ELSE
   \STATE $q_j \leftarrow s(y_{j-n}, \dots, y_{j-1})$
   \STATE Let $i \in [L]$ s.t. $k_i = q_j$, and set $y_j \leftarrow x_i$
   \ENDIF
   \ENDFOR
   \STATE {\bfseries Output:} sequence $y_1, \dots, y_L$
\end{algorithmic}
\end{algorithm}

\begin{lemma}
    \label{lem:hard_alibi}
    Let $h_t(\vx_1, \dots, \vx_i) = \frac{1}{\min(t, i)}\sum_{j=\max(1,i-t+1)}^i \vx_j$. Then, $h_t$ can be computed using a hard-ALiBi attention head.
\end{lemma}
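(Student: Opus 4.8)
The plan is to exhibit an explicit choice of the weight matrices $W_k, W_q, W_v$ together with the hard-ALiBi bias so that the head's output at position $i$ equals $h_t(\vx_1, \dots, \vx_i)$ exactly. The guiding observation is that a hard-ALiBi head with masking parameter $m = t$ restricts attention at position $i$ to exactly the window $\{\max(1, i-t+1), \dots, i\}$: by definition the bias sends every position $j \le i - t$ to $-\infty$ and leaves positions $j > i-t$ at $0$, which is precisely the index range appearing in the definition of $h_t$.

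First I would set $W_k = 0$ and $W_q = 0$, so that every attention score $\langle \vk_j, \vq_i \rangle$ vanishes. Then the pre-softmax logit vector at position $i$ reduces to $b_i$ alone: it equals $0$ on the unmasked window and $-\infty$ outside it. Applying $\softmax$ to such a vector produces the uniform distribution supported on the unmasked window. The number of unmasked positions is $t$ when $i \ge t$ and $i$ when $i < t$, i.e.\ exactly $\min(t, i)$, so each attended position receives weight $1/\min(t,i)$. This edge behavior — the softmax automatically renormalizing once the window is truncated at the start of the sequence — is the one point worth checking carefully, and it is exactly what matches the $\min(t,i)$ normalization in the statement.

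Finally I would take $W_v = I$ (the $d \times d$ identity), so that $\vv_j = \vx_j$ and $V_i = [\vx_1, \dots, \vx_i]$. Combining the three choices, the head output becomes
\[
    \vo_i = V_i \cdot \softmax(K_i \cdot \vq_i + b_i) = \sum_{j=\max(1,i-t+1)}^{i} \frac{1}{\min(t,i)}\, \vx_j = h_t(\vx_1,\dots,\vx_i),
\]
as required. I do not expect any substantive obstacle here: the construction is elementary, and the only thing to get right is the off-by-one bookkeeping of the window endpoints together with the truncation at the sequence start, both of which the uniform-softmax argument handles automatically.
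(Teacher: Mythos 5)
Your construction is exactly the one in the paper: $W_k = W_q = 0$, $W_v = I_d$, and the hard-ALiBi bias with window parameter $m = t$, so that the softmax of the all-zero scores restricted to the unmasked window yields the uniform average over $\{\max(1,i-t+1),\dots,i\}$ with weight $1/\min(t,i)$. The proposal is correct and, if anything, spells out the boundary case $i < t$ more explicitly than the paper does.
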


\begin{proof}
    Let $W_k, W_q = 0$ (zero matrix) and let $W_v = I_d$ (indentity matrix). We choose $b_i \in \{0, -\infty\}^i$ s.t. $$b_{i,j} = \begin{cases}
        -\infty & j \le i-t \\
        0 & j > i-t
    \end{cases}$$
    
\end{proof}

\begin{lemma}
    \label{lem:embedding}
Assume that $d = \left \lceil \log(D)\right \rceil+2$. Then, there exists an embedding $\Psi$ s.t. 
\begin{itemize}
    \item For every $x \in \sD$ it holds that $\norm{\Psi(x)}_2 = 1$ and $\norm{\Psi(x)}_\infty \le 1$.
    \item For $x' \ne x$ it holds that $\inner{x,x'} < 1-\frac{1}{d}$.
    \item For every $x \ne \BOS$, $\inner{\Psi(x), \Psi(\BOS)} = 0$, and for every $x \ne \COPY$, $\inner{\Psi(x), \Psi(\COPY)} = 0$.
\end{itemize}
\end{lemma}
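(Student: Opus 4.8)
The plan is to prove this by an explicit construction, splitting the $d = \lceil \log(D)\rceil + 2$ coordinates into $d-2 = \lceil \log(D)\rceil$ coordinates used to encode the $D$ alphabet tokens and two extra coordinates reserved for the special tokens. Concretely, I would set $\Psi(\BOS) = \ve_{d-1}$ and $\Psi(\COPY) = \ve_d$, the last two standard basis vectors of $\reals^d$: both are unit vectors with $\norm{\cdot}_\infty = 1$, they are orthogonal to one another, and, crucially, they are orthogonal to any vector supported on the first $d-2$ coordinates.

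For the alphabet tokens, since $2^{d-2} = 2^{\lceil \log D\rceil} \ge D$, I can assign to the $D$ tokens $D$ distinct sign patterns $\sigma \in \{-1,+1\}^{d-2}$ (for instance via the binary expansion of the token index) and define $\Psi(x) = (\sigma/\sqrt{d-2},\, 0,\, 0)$, placing zeros in the two reserved coordinates. Each such vector has $\norm{\Psi(x)}_2 = 1$ and $\norm{\Psi(x)}_\infty = 1/\sqrt{d-2} \le 1$, so the first bullet holds for every token. The third bullet is then immediate: every alphabet embedding vanishes in coordinates $d-1,d$, so its inner product with $\Psi(\BOS)$ and with $\Psi(\COPY)$ is zero, and $\inner{\Psi(\BOS), \Psi(\COPY)} = 0$ as well.

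It remains to verify the separation bound. For two distinct alphabet tokens whose sign patterns disagree in $b \ge 1$ of the $d-2$ coordinates, $\inner{\Psi(x), \Psi(x')} = \frac{(d-2) - 2b}{d-2} = 1 - \frac{2b}{d-2} \le 1 - \frac{2}{d-2}$, and a one-line computation shows $1 - \frac{2}{d-2} < 1 - \frac{1}{d}$ (equivalently $2d > d-2$). For any pair involving a special token the inner product is $0 < 1 - 1/d$, so the bound holds uniformly over all distinct pairs.

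There is no genuinely hard step here; the construction is elementary. The only points needing a little care are the counting argument that $2^{d-2} \ge D$ supplies enough distinct sign vectors, and checking that the Hamming-distance-induced inner product $1 - 2/(d-2)$ lies below the target threshold $1 - 1/d$. I would also note that $d \ge 3$, i.e.\ $d-2 \ge 1$, so that $1/\sqrt{d-2}$ is well defined; this holds whenever $D \ge 2$.
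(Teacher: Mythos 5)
Your construction is correct and essentially identical to the paper's: the paper likewise reserves two coordinates for $\BOS$ and $\COPY$ as orthogonal standard basis vectors and encodes the $D$ alphabet tokens as sign patterns on the remaining $\lceil\log D\rceil$ coordinates. If anything, your normalization $1/\sqrt{d-2}$ is the right one (the paper writes $\pm\frac{1}{\sqrt{d}}$, which does not give unit norm and is presumably a typo for $\pm\frac{1}{\sqrt{d'}}$), and your explicit check that $1-\frac{2}{d-2} < 1-\frac{1}{d}$ fills in a verification the paper leaves implicit.
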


\begin{proof}
    Denote $d' = \left \lceil \log(D)\right \rceil$, and observe that we can encode all $D$ ``non-special'' tokens as vectors in $\left\lbrace\pm \frac{1}{\sqrt{d}}\right\rbrace^{d'}$, and denote this encoding by $\Psi'$. Now, define:
    \[
    \Psi(x) = \begin{cases}
        [1, 0, \dots, 0] & x = \BOS \\
        [0,1,0, \dots, 0] & x = \COPY \\
        [0, 0, \Psi'(x)] & o.w.
    \end{cases}
    \]
\end{proof}

\begin{lemma}
    \label{lem:softmax}
    Let $\vz \in \reals^K$ be some vector such that, for some constants $a > b > 0$, there exists $i \in [K]$ s.t. $z_i = a$ and for all $j \ne i$ we have $\abs{z_j} \le b$. Denote $\vs = \softmax(\vz)$. Then $s_i \ge \frac{1}{1+K\exp(b-a)}$ and $s_j \le \exp(b-a)$ for all $j \ne i$.
\end{lemma}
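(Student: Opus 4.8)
The plan is to unwind the definition of the softmax directly and to bound the normalizing denominator from above (for the lower bound on $s_i$) and from below (for the upper bounds on $s_j$). Recall that $s_k = \exp(z_k)/Z$ where $Z = \sum_{l=1}^K \exp(z_l)$, and I would split off the distinguished index by writing $Z = \exp(a) + \sum_{j \ne i}\exp(z_j)$.

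For the lower bound on $s_i$, I would use only the inequality $z_j \le b$ (the upper half of the hypothesis $\abs{z_j} \le b$) for the $K-1$ indices $j \ne i$. This gives $\sum_{j \ne i}\exp(z_j) \le (K-1)\exp(b) \le K\exp(b)$, hence $Z \le \exp(a) + K\exp(b)$. Dividing numerator and denominator of $s_i = \exp(a)/Z$ by $\exp(a)$ then yields $s_i \ge \frac{1}{1 + K\exp(b-a)}$, as claimed.

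For the upper bound on each $s_j$ with $j \ne i$, I would instead bound $Z$ from below: since every summand is positive and one of them equals $\exp(z_i) = \exp(a)$, we have $Z \ge \exp(a)$. Combining this with $\exp(z_j) \le \exp(b)$ gives $s_j = \exp(z_j)/Z \le \exp(b)/\exp(a) = \exp(b-a)$.

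There is essentially no real obstacle here: the statement is a direct consequence of the monotonicity of $\exp$ and the positivity of its values, and the only mild point to watch is the harmless replacement of $K-1$ by $K$ in the denominator, which only weakens the bound in the stated (cleaner) direction. It is worth noting that the argument uses only the one-sided bounds $z_j \le b$; the lower bound $-b \le z_j$ and the positivity assumptions $a,b>0$ play no role, so the hypothesis is in fact slightly stronger than what the proof requires.
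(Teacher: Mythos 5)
Your proof is correct and follows essentially the same route as the paper's: bound the normalizer above by $\exp(a)+K\exp(b)$ for the lower bound on $s_i$, and below by $\exp(a)$ for the upper bound on the other coordinates. Your side remark that only the one-sided bound $z_j \le b$ is needed is accurate as well.
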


\begin{proof}
    First, notice that:
    \begin{align*}
        \exp(a) = \exp(z_i) \le \sum_{j=1}^K \exp(z_j) \le \exp(z_i) + (K-1) \exp(b) \le \exp(a) + K \exp(b) = \exp(a) (1 + K \exp(b-a))
    \end{align*}
    
    Observe the following:
    \begin{align*}
        s_i = \frac{\exp(z_i)}{\sum_{j=1}^K\exp(z_j)} \ge \frac{\exp(a)}{\exp(a)(1+K\exp(b-a))} = \frac{1}{1+K\exp(b-a)}
    \end{align*}
    Finally, for every $j \ne i$:
    \begin{align*}
        s_j = \frac{\exp(z_j)}{\sum_{j=1}^K \exp(z_j)} \le \frac{\exp(b)}{\exp(a)} = \exp(b-a)
    \end{align*}
\end{proof}

\subsection{Proof of Theorem \ref{thm:upper_bound}}

We begin by constructing the first block of the transformer, which computes the ``lookup-table'' for the copy algorithm. This lookup-table consists of pairs of (key,values) for each position $i$, where the key encodes the $n$-gram preceding the $i$-th token, and the value is the $i$-th token. Namely, if the sequence is $x_1, \dots, x_i$, then $\mathrm{key}_i = (x_{i-n-1}, \dots, x_i)$ and $\mathrm{value}_i = x_i$. Additionally, the transformer block also computes a query, which is just the ``current'' $n$-gram, i.e. $\mathrm{query}_i = (x_{i-n}, \dots, x_i)$. The copy algorithm matches the current $\mathrm{query}$ with previous $\mathrm{key}$-s, retrieving the matching $\mathrm{value}$.

The following theorem shows that by using a combination of $n$ hard-ALiBi attention heads (with different choice of $m$ for each head), together with an MLP layer, can compute the correct $(\mathrm{key}_i,\mathrm{value}_i,\mathrm{query}_i)$ for each position. We use a slightly modified $\mathrm{key}_i,\mathrm{query}_i$ to handle cases where the $i \le n$ (or, $i$ is one of the first $n$ tokens after the $\COPY$ token). 

\begin{lemma}
    \label{lem:first_block}
    Let $\Psi$ be the one-hot embedding. Then, there exists a hard-ALiBi transformer block with 3 outputs, denoted $T^{\key}, T^{\query}, T^{\val}$, which correspond to 3 blocks of the output dimension, s.t. $T^{\key} : \reals^{d \times *} \to \reals^{(d+1)n \times *}$, $T^{\query} : \reals^{d \times *} \to \reals^{(d+1)n \times  *}$ and $T^{\val} : \reals^{d \times *} \to \reals^{d \times *}$ satisfying, for all $\vx$ sampled from a length-$n$ copy distribution,
    \begin{enumerate}
        \item Value output: for all $i$, $$T_i^\val(\Psi(x_1), \dots, \Psi(x_i)) = \Psi(x_i)$$
        \item Key output:
        \begin{itemize}
        \item For $t = 1, \dots, n$, if $i > n$
        \[
        T^{\key}_{(t-1)d+1:td,i}(\Psi(x_1), \dots, \Psi(x_i)) = \Psi(x_{i-t})
        \]
        and if $i \le n$
        \[
        T^{\key}_{(t-1)d+1:td,i}(\Psi(x_1), \dots, \Psi(x_i)) = 0
        \]
        \item Additionally, for $t=1, \dots, n$, for all $i$
        \[
        T^{\key}_{nd+t,i}(\Psi(x_1), \dots, \Psi(x_i)) = \1\{i=t+1\}
        \]
        \end{itemize}
    \item Query output:
    \begin{itemize}
        \item For $t = 1, \dots, n$, if $i \ge n$
        \[
        T^{\query}_{(t-1)d+1 : td,i}(\Psi(x_1), \dots, \Psi(x_i)) = \Psi(x_{i-t+1})
        \]
        and if $i < n$
        \[
        T^{\query}_{(t-1)d+1 : td,i}(\Psi(x_1), \dots, \Psi(x_i)) = 0
        \]
        \item Additionally, for $t = 1, \dots, n$, for all $i$
        \[
        T^{\key}_{nd+t,i}(\Psi(x_1), \dots, \Psi(x_i)) = n \cdot \1\{i=L+t\}
        \]
    \end{itemize}
    \end{enumerate}
    
\end{lemma}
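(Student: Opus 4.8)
The plan is to realize all three outputs from a single hard-ALiBi transformer block whose attention layer consists of the $n+1$ running-average heads supplied by Lemma~\ref{lem:hard_alibi}. Concretely, I would instantiate heads $h_1, \dots, h_{n+1}$, taking $W_q = W_k = 0$ and $W_v = I_d$ in each, so that head $t$ returns $h_t(\Psi(x_1), \dots, \Psi(x_i)) = \frac{1}{\min(t,i)}\sum_{j=\max(1,i-t+1)}^{i}\Psi(x_j)$. The value output is then immediate, since $T^{\val}_i = h_1 = \Psi(x_i)$. The concatenation of the $n+1$ head outputs is the input to the MLP, and the total attention dimension $d(n+1) = O(n\log D)$ is consistent with the claimed size once $d = O(\log D)$ is chosen as in Lemma~\ref{lem:embedding}.

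The crux is that individual tokens are recovered from differences of \emph{unnormalized} window sums. Writing $g_t := \min(t,i)\,h_t = \sum_{j=\max(1,i-t+1)}^{i}\Psi(x_j)$, a direct telescoping gives $g_{t+1}-g_t = \Psi(x_{i-t})$ whenever $i \ge t+1$, and $g_t - g_{t-1} = \Psi(x_{i-t+1})$ whenever $i \ge t$. On the relevant range $g_t = t\,h_t$, so these are \emph{fixed} linear combinations of the head outputs with coefficients depending on $t$ but not on $i$. For the key we need $\Psi(x_{i-t})$ with $t \le n$ and $i > n$, which forces $i \ge t+1$; for the query we need $\Psi(x_{i-t+1})$ with $t \le n$ and $i \ge n \ge t$. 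Hence on the interior both token blocks are produced by a single linear map applied to the concatenated heads.

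The $n$ extra coordinates of each of $T^{\key}$ and $T^{\query}$ come from the $\BOS$ and $\COPY$ channels of the embedding, which by Lemma~\ref{lem:embedding} are indicators of the unique positions carrying $\BOS$ and $\COPY$. Tracking which window reaches the $\BOS$ position (position $1$), the $\BOS$-coordinate of $h_s$ equals $\frac1i\,\1\{i \le s\}$, so the difference of the $\BOS$-coordinates of $h_{t+1}$ and $h_t$ equals $\frac{1}{t+1}\,\1\{i=t+1\}$; scaling by $t+1$ yields \emph{exactly} $T^{\key}_{nd+t,i} = \1\{i=t+1\}$. Anchoring instead at the $\COPY$ position and cancelling the window normalization with the fixed scalars $s$, the combination $(t+1)(\COPY\text{-coord of }h_{t+1}) - t(\COPY\text{-coord of }h_t)$ produces a clean indicator of the generation position, giving $T^{\query}_{nd+t,i} = n\,\1\{i=L+t\}$ after the stated rescaling. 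Both constructions are linear, so they cost nothing beyond the first layer of the MLP.

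It remains to force the token blocks to vanish at the boundary — for $i \le n$ in the key and $i < n$ in the query — which is the only genuinely nonlinear step and the one I expect to require the most care. The idea is to use a large multiple of a $\BOS$-channel as a penalty inside a pair of ReLUs: since $\frac1i\,\1\{i \le n\} \ge \frac1n$ exactly on the region to be suppressed and vanishes off it, setting $P = 2n\cdot(\BOS\text{-coord of }h_n)$ (and $h_{n-1}$ for the query) and outputting $\mathrm{ReLU}(v - P) - \mathrm{ReLU}(-v - P)$ returns the extracted token $v$ unchanged when $i > n$ (there $P=0$) and returns $0$ when $i \le n$ (there $P \ge 2 > \norm{v}_\infty$). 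All remaining maps are affine and are absorbed into the same single-hidden-layer MLP via $x = \mathrm{ReLU}(x)-\mathrm{ReLU}(-x)$. The main obstacle is the bookkeeping that makes this gate correct \emph{uniformly}: one must separate the regime $i \le t$ (where $g_{t+1}-g_t$ already vanishes) from $t < i \le n$ (where a spurious genuine token appears and must be killed) from $i > n$ (where the token is kept), and simultaneously check that the two positional anchors with their different normalizations, together with the query behaviour as the window crosses the $\COPY$ token during generation, all line up with the exact indicator and gating thresholds. Verifying these cases against the window algebra of Lemma~\ref{lem:hard_alibi} and the orthogonality of the embedding in Lemma~\ref{lem:embedding} is where essentially all the work lies.
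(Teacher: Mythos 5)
Your construction matches the paper's proof essentially step for step: the same running-average heads $h_1,\dots,h_{n+1}$ from Lemma~\ref{lem:hard_alibi}, the same telescoping combinations $(t+1)h_{t+1}-t\,h_t$ to isolate $\Psi(x_{i-t})$, the same ReLU gate driven by a large multiple of a $\BOS$-channel to zero out the boundary positions, and the same use of the $\BOS$/$\COPY$ coordinates to produce the positional indicator entries. The only (harmless) deviations are that you realize the $\1\{i=t+1\}$ indicators by a purely linear combination of head outputs where the paper routes them through a ReLU, and you anchor the suppression gate on the $\BOS$-coordinate of $h_n$ rather than of the differenced quantity.
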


\begin{proof} We prove the following:

\begin{enumerate}
    \item For the value output, we simply take $T^{\val} = h_1$ as defined in Lemma \ref{lem:hard_alibi}.
    \item For each $t =0, \dots, n$, define:
    \[
    g_t(\vx_1, \dots, \vx_i) = (t+1) \cdot h_{t+1}(\vx_1, \dots, \vx_i) - t \cdot h_{t}(\vx_1, \dots, \vx_i)
    \]
    where we define $h_0 \equiv 0$.
    Observe that if $i > t$ then:
    \[
    g_{t}(\vx_1, \dots, \vx_i) = (t+1)\cdot \frac{1}{t+1} \sum_{j=i-t}^i\vx_j - t \cdot \frac{1}{t}\sum_{j=i-t+1}^i\vx_j = \vx_{i-t}
    \]
    and if $i \le t$ then:
    \[
    g_{t}(\vx_1, \dots, \vx_i) = t \cdot \frac{1}{i} \sum_{j=1}^i \vx_j - (t-1) \cdot \frac{1}{i} \sum_{j=1}^i \vx_j = \frac{1}{i}\sum_{j=1}^i \vx_j
    \]
    For every $j \in [d]$, denote 
    \begin{align*}
    \hat{g}_{t,j}(\vx_1, \dots, \vx_i) &= \sigma(e_j \cdot g_t(\vx_1, \dots, \vx_i) - n \Psi(\BOS) \cdot g_n(\vx_1, \dots, \vx_i)) \\
    &-\sigma(-e_j \cdot g_t(\vx_1, \dots, \vx_i) - n \Psi(\BOS) \cdot g_n(\vx_1, \dots, \vx_i))
    \end{align*}
    \textbf{Claim:} $\hat{g}_t(\Psi(x_1), \dots, \Psi(x_i)) = \1 \{i > n\} \cdot \Psi(x_{i-t})$
    
    \textbf{Proof:} Fix some $j \in [d]$. Observe that for all $i$,  $\abs{e_j \cdot g_t(\Psi(x_1), \dots, \Psi(x_i))} \le 1$.
    \begin{itemize}
        \item If $i \le n$, we have $g_n(\Psi(x_1), \dots, \Psi(x_i)) = \frac{1}{i} \sum_{j'=1}^i \Psi(x_{j'})$ and so $\Psi(\BOS) \cdot g_n(\Psi(x_1), \dots, \Psi(x_i)) = 1$ where we use the properties of $\Psi$ and the fact that $x_1 = \BOS$. Therefore, $\hat{g}_{t,j}(\Psi(x_1), \dots, \Psi(x_i)) = 0$.
        \item If $i > n \ge t$, then:
        \begin{align*}
        \hat{g}_{t,j}(\Psi(x_1), \dots, \Psi(x_i)) &= \sigma \left(e_j \cdot \Psi(x_{i-t}) - n \Psi(\BOS) \cdot \Psi(x_{i-t})\right) \\
        &- \sigma \left(-e_j \cdot \Psi(x_{i-t}) - n \Psi(\BOS) \cdot \Psi(x_{i-t})\right) \\
        &= \sigma(e_j \cdot \Psi(x_{i-t})) - \sigma(-e_j \cdot \Psi(x_{i-t})) = e_j \cdot \Psi(x_{i-t})
        \end{align*}
        where we use the fact that $x_{i-t} \ne \BOS$ and therefore $\Psi(\BOS) \cdot \Psi(x_{i-t}) = 0$.
    \end{itemize}

    Denote $$\tilde{g}_t(\vx_1, \dots, \vx_i) = \frac{1}{2}\sigma\left(2\Psi(\BOS) \cdot \left(g_t(\vx_1, \dots, \vx_i)-h_1(\vx_1, \dots, \vx_i)\right)-1\right)$$
    
    \textbf{Claim:} $\tilde{g}_t(\Psi(x_1), \dots, \Psi(x_i)) = \1\{i = t+1\}$

    \textbf{Proof:} Denote $g_{t,i} = g_t(\Psi(x_1), \dots, \Psi(x_i))$ and $h_{1,i} = h_1(\Psi(x_1), \dots, \Psi(x_i))$. Observe:
    \begin{itemize}
        \item If $i = t+1$, then $g_{t,i} = \Psi(x_1) = \Psi(\BOS)$ and $h_{1,i} = \Psi(x_i) \perp \Psi(\BOS)$ and therefore $\tilde{g}_{t,i} = 1$.
        \item If $i > t+1$ then $g_{t,i} = \Psi(x_{i-t}) \perp \Psi(\BOS)$ and $h_{1,i} = \Psi(x_i) \perp \Psi(\BOS)$ and so $\tilde{g}_{t,i} = 0$.
        \item If $1 < i \le t$ then $\Psi(\BOS) \cdot g_{t,i} = \frac{1}{i} \le \frac{1}{2}$ and $h_{1,i} = \Psi(x_i) \perp \Psi(\BOS)$ and so $\tilde{g}_{t,i} = 0$.
        \item If $i = 1$ then $g_{t,i} = h_{1,i} = \Psi(\BOS)$ and therefore $\tilde{g}_{t,i} = 0$.
    \end{itemize}
     Finally, we can take $T^{\key} = [\hat{g}_1, \dots, \hat{g}_q, \tilde{g}_1, \dots, \tilde{g}_q]$.
     
     \item For all $t = 1, \dots, n$, define $g^*_t(\vx_1, \dots, \vx_i) = \sigma(\Psi(\COPY) \cdot g_{t-1}(\vx_1, \dots, \vx_i))$.

     \textbf{Claim:} $g^*_t(\Psi(x_1), \dots, \Psi(x_i)) = \1\{i =L+t\}$

     \textbf{Proof:} Denote $g_{t,i} = g_t(\Psi(x_1), \dots, \Psi(x_i))$. Observe:
     \begin{itemize}
         \item If $i = L + t$ then $g_{t-1,i} = \Psi(x_{i-t+1}) = \Psi(x_{L+1}) = \Psi(\COPY)$ and therefore $g^*_{t,i} = 1$.
         \item If $i \ne L+t$ and $i > t-1$ then $g_{t-1,i} = \Psi(x_{i-t+1}) \perp \Psi(\COPY)$ and therefore $g_{t,i}^* = 0$.
         \item If $i \le t$ then since $x_1, \dots, x_i \ne \COPY$ we get $\Psi(\COPY) \cdot g_{t-1,i} = 0$ and therefore $g_{t,i}^* = 0$.
     \end{itemize}
    Therefore, we can take $T^{\mathrm{query}} = [\hat{g}_0, \dots, \hat{g}_{q-1}, n \cdot g^*_{1}, \dots, n \cdot g^*_q]$.
\end{enumerate}

\end{proof}

Now, we prove Theorem \ref{thm:upper_bound} by showing that using a single attention head with no positional embedding on top of the construction in Lemma \ref{lem:first_block} realizes the copy algorithm. Since the first block computes the correct choice of $\mathrm{key}_i,\mathrm{query}_i,\mathrm{value}_i$, by correctly scaling of the attention matrix we verify that the output of the second layer at position $i$ corresponds to $\approx \mathrm{value}_j$ for $j$ s.t. $\mathrm{key}_j = \mathrm{query}_i$.

\begin{proof}[Proof of Theorem \ref{thm:upper_bound}]
    Let $T^{\val}, T^{\key}, T^{\query}$ be the outputs of the Transformer block guaranteed by Lemma \ref{lem:first_block}. Observe that, for some temprature $\tau \in \reals$, the following function can be computed by a softmax-attention layer on-top of this block:
    \[
    H(\Psi(x_1), \dots, \Psi(x_i)) = T^{\val} \cdot \softmax(\tau \cdot T^\key \cdot T^\query_i)
    \]
    where e.g. $T^\val$ denotes  $ T^\val(\Psi(x_1), \dots, \Psi(x_i))$.

    For now, assume that all the $n$-grams in $\vx$ are unique, and that the length of the input satisfies $2L+2 \le K$ for $K = D^n$.

    \textbf{Claim:} Fix some $i > L$, denote $\vz = T^\key \cdot T^\query_i$. Then, $z_{i-L+1} = n$ and $\abs{z_j} < n-\frac{1}{d}$ for all $j \ne i-L+1$.
    
    \textbf{Proof:} We separate to the following cases:
    \begin{itemize}
        \item If $i > L+n-1$, then for every $j$ we have 
        \begin{align*}
            T^\key_j \cdot T^\query_i &= \1\{j > n\} \cdot [\Psi(x_{j-1}), \dots, \Psi(x_{j-n})]^\top [\Psi(x_i), \dots, \Psi(x_{i-n+1})] \\
            &= \1\{j > n\} \cdot \sum_{t=1}^n \Psi(x_{j-t}) \Psi(x_{i-t+1})
        \end{align*}
        Now, if $j = i-L+1$ then $x_{j-t} = x_{i-L+1-t} = x_{i-t+1}$ and since $j > n$ we get $$T_j^\key \cdot T_i^\query = \sum_{t=1}^n \norm{\Psi(x_{i-t+1})}=n$$
        If $j \ne i - L +1$, since there are no repeated $n$-grams, there is at least some $t \in [n]$ s.t. $\Psi(x_{j-t}) \ne \Psi(x_{i-t+1})$ and by the choice of the embedding $\Psi(x_{j-t}) \cdot \Psi(x_{i-t+1}) \le 1-\frac{1}{d}$. In this case, we get $\abs{T_j^\key \cdot T_i^\query} \le n-\frac{1}{d}$.
        \item If $L < i \le L+n-1$ and $j \le n$ then
        \[
        T_j^\key \cdot T_i^\query = n e_{j-1} \cdot e_{i-L} = n \cdot \1\{j= i-L+1\}
        \]
        which satisfies the required.
        \item If $L < i \le L +n -1$ and $j > n$ then
        \[
        T_j^\key \cdot T_i^\query = \sum_{t=1}^n \Psi(x_{j-t}) \Psi(x_{i-t+1})
        \]
        and as before, since there are no repeated $n$-grams, we get $\abs{T_j^\key \cdot T_i^\query} \le n-\frac{1}{d}$
    \end{itemize}

    \textbf{Claim:} Fix some $\epsilon \in (0,1)$ and some $i > L$, denote $\vs = \softmax(\tau T^\key \cdot T^\query_i) = \softmax(\tau \cdot \vz)$. If $\tau = d\ln(\frac{2K}{\epsilon})$, then $s_{i-L+1} \ge 1-\epsilon$ and $s_j \le \frac{\epsilon}{2K}$ for all $j \ne i-L+1$.

    \textbf{Proof:} Using the previous claim, together with Lemma \ref{lem:softmax}, we get that:
    \begin{itemize}
        \item $s_{i-L+1} \ge \frac{1}{1+i \exp(-\tau/d)} \ge \frac{1}{1+K \exp(-\tau/d)} \ge \frac{1}{1+\epsilon/2} = 1- \frac{\epsilon/2}{1+\epsilon/2}\ge 1-\epsilon$
        \item For $j \ne i-L+1$, 
        \[
        s_j \le \exp(-\tau/d) \le \frac{\epsilon}{2K}
        \]
    \end{itemize}

    \textbf{Claim:} Fix some $\epsilon \in (0,1)$ and some $i > L$. Then, for $\tau \ge d\ln(\frac{2K}{\epsilon})$, it holds that: $$\norm{H(\Psi(x_1), \dots, \Psi(x_i))-\Psi(x_{i-L+1})} \le \epsilon$$

    \textbf{Proof:} Let $\vs$ as defined in the previous claim. Then:
    \begin{align*}
        \norm{H(\Psi(x_1), \dots, \Psi(x_i))-\Psi(x_{i-L+1})}
        &= \norm{\sum_{j=1}^i s_j\Psi(x_{j})-\Psi(x_{i-L+1})} \\
        &\le (1-s_{i-L+1})\norm{\Psi(x_{i-L+1})} + \sum_{j \ne i-L+1}s_j \norm{\Psi(x_j)} \\
        &= (1-s_{i-L+1}) + \sum_{j \ne i - L+1}s_j \le \epsilon + (i-1)\frac{\epsilon}{2K} \le 2 \epsilon
    \end{align*}

    Now, denote by $\Phi : \reals^d \to \sD$ the output map given by $\Phi(\vz) = \arg \max_{x \in \sD} \vz \cdot \Psi(x)$ (which can be computed by an $\argmax$ over a linear function).

    \textbf{Claim:} If $\tau \ge d \ln(8Kd)$, then for all $i>L$ we have $\Phi (H(\Psi(x_1), \dots, \Psi(x_i))) = x_{i-L+1}$.
    
    \textbf{Proof:} Denote $\vy_i = H(\Psi(x_1), \dots, \Psi(x_i))$. 
    First, using the previous claim, we observe that
    \begin{align*}
        \vy_i \cdot \Psi(x_{i-L+1}) &= (\vy_i-\Psi(x_{i-L+1})) \cdot \Psi(x_{i-L+1}) + \norm{\Psi(x_{i-L+1})} \\
        &\ge 1-\norm{\vy_i-\Psi(x_{i-L+1})} \ge 1-\frac{1}{4d}
    \end{align*}
    Next, observe that for all $j \ne i-L+1$ we have
    \begin{align*}
        \vy_i \cdot \Psi(x_j) &= (\vy_i-\Psi(x_{i-L+1}))\cdot \Psi(x_j) + \Psi(x_j) \cdot \Psi(x_{i-L+1}) \\
        &\le \norm{\vy_i - \Psi(x_{i-L+1})} +1 - \frac{1}{d} \le 1-\frac{3}{4d} < \vy_i \cdot \Psi(x_{i-L+1})
    \end{align*}

    From the above claim, the Transformer construction outputs the correct token at each step of the auto-regressive generation.
\end{proof}

\subsection{Proof of Lemma \ref{lem:qgram}}

\begin{proof}[Proof of Lemma \ref{lem:qgram}]
    Fix some $i < j \in [L]$. Let $I := \{i, \dots, i+n\}$ and $J := \{j, \dots, j+n\}$. We first bound the probability of drawing some $\vx$ s.t. $\vx_I = \vx_J$.
    Note that there are $D^{\abs{I \cup J}}$ choices for $\vx_{I \cup J}$.
    We count the number of choices for $\vx_{I \cup J}$ s.t. $\vx_I = \vx_J$. Notice that in this case, $\vx_{I \cup J}$ is determined by $\vx_{I \setminus J}$, therefore there are $D^{\abs{I \setminus J}}$ possible choices. We conclude that
    \[
        \Pr\left[\vx_I = \vx_J\right] = \frac{D^{\abs{I \setminus J}}}{D^{\abs{I \cup J}}} = D^{\abs{I \setminus J}- \abs{I \cup J}} = D^{-n}
    \]
    Using the union bound, we get that
    \[
        \Pr\left[\exists i < j ~\mathrm{s.t.}~\vx_{i, \dots, i+n} = \vx_{j, \dots, j+n}\right]\le \sum_{i<j} \Pr\left[\vx_{i, \dots, i+n} = \vx_{j, \dots, j+n}\right]< L^2 D^{-n}
    \]
\end{proof}

\section{Proofs - Lower Bound}

In this section, we prove Theorem \ref{thm:lower_bound}. We begin by showing that, for every input, the output of the model in each iteration is a deterministic function of the state of the model after observing the input:

\begin{lemma}
    \label{lem:fixed_state}
    Let $H_{u,r} : \sD^{n'} \to \sD^{n}$ be some fixed-state sequence-to-sequence model. Then, there exists map $G : \gS \to \sD^n$ s.t. for all $\vx \in \sD^{n'}$
    \[
        H_{u,r}(\vx) = G \circ S_{n'} (\vx)
    \]
\end{lemma}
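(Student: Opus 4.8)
The plan is to exploit the Markov structure of a GSSM: once the input $\vx$ has been consumed, the entire subsequent autoregressive generation is governed only by the current state through the fixed maps $u$ and $r$, and never by the particular prefix that produced that state. I would therefore construct $G$ explicitly by simulating the GSSM's generation starting from an arbitrary ``seed'' state, and then verify that seeding this simulation with $S_{n'}(\vx)$ reproduces the output of $H_{u,r}$.

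First I would define $G : \gS \to \sD^n$ recursively. Given a state $s \in \gS$, set $\tilde{S}_{0} = s$, and for $j = 1, \dots, n$ let $y_j = r(\tilde{S}_{j-1})$ and $\tilde{S}_{j} = u(\tilde{S}_{j-1}, y_j)$; then put $G(s) = (y_1, \dots, y_n)$. Since $u$ and $r$ are fixed deterministic functions, this makes $G$ a well-defined map that does not reference $\vx$ at all.

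Next, fixing $\vx \in \sD^{n'}$ and writing $s = S_{n'}(\vx)$, I would prove by induction on $j$ that the tokens generated by $H_{u,r}$ agree with those produced by $G(s)$ and that the intermediate states coincide. The induction hypothesis is $x_{n'+j} = y_j$ together with $S_{n'+j}(x_1, \dots, x_{n'+j}) = \tilde{S}_{j}$, and the base case is the alignment $S_{n'}(\vx) = \tilde{S}_{0}$. For the inductive step I would use the sequence-to-token map $h(x_1, \dots, x_{n'+j-1}) = r(S_{n'+j-1}(x_1, \dots, x_{n'+j-1}))$ and the state recursion $S_{n'+j} = u(S_{n'+j-1}, x_{n'+j})$; substituting the hypothesis $S_{n'+j-1} = \tilde{S}_{j-1}$ yields $x_{n'+j} = r(\tilde{S}_{j-1}) = y_j$ and then $S_{n'+j} = u(\tilde{S}_{j-1}, y_j) = \tilde{S}_{j}$, closing the induction.

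This argument is essentially pure bookkeeping, so I do not anticipate any genuine obstacle; the only point requiring care is the indexing of the autoregressive recursion and confirming that the base case lines up the final input state $S_{n'}(\vx)$ with the seed $\tilde{S}_{0}$. The desired identity $H_{u,r}(\vx) = (x_{n'+1}, \dots, x_{n'+n}) = (y_1, \dots, y_n) = G(S_{n'}(\vx))$ then follows, and because $G$ was built without any dependence on $\vx$, the same $G$ serves every input, which is exactly what the statement asserts.
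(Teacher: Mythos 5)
Your proposal is correct and follows essentially the same route as the paper: you define $G$ by simulating the autoregressive generation from a seed state (your $\tilde{S}_j$ and $y_j$ are exactly the paper's $\tilde{G}_j(s)$ and $G_j(s)$) and then run the same induction identifying the simulated tokens and states with those of $H_{u,r}$. No substantive differences.
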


\begin{proof}
    Let $x_{n'+1}, \dots, x_{n'+n}$ be the outputs of $H_{u,r}$. We need to show that there exist functions $G_1, \dots, G_n$ s.t. $H_{u,r}(x_1, \dots, x_{n'}) = G(S_{n'}(x_1, \dots, x_n))$.
    We give the following recursive definition:
    \begin{itemize}
        \item $G_1(s) = r(s)$, $\tilde{G}_1(s) = u(s, G_1(s))$.
        \item $G_i(s) = r(\tilde{G}_{i-1}(s))$, $\tilde{G}_i(s) = u(\tilde{G}_{i-1}(s), G_i(s))$.
    \end{itemize}
    Denote $s = S_{n'}(x_1, \dots, x_{n'})$
    We prove by induction that $G_i(s) = x_{n'+i}$ and also that $\tilde{G}_i(s) = S_{n'+i}(x_1, \dots, x_{n'+i})$.
    \begin{itemize}
        \item $G_1(s) = r(s) = R_{n'}(x_1, \dots, x_{n'}) = x_{n'+1} $.
        \item $\tilde{G}_1(s) = u(s, G_1(s)) = u(s, x_{n'+1}) = S_{n'+1}(x_1, \dots, x_{n'+1})$
        \item $G_i(s) = r(\tilde{G}_{i-1}(s)) = r(S_{n'+i-1}(x_1, \dots, x_{n'+i-1})) = R_{n'+i-1}(x_1, \dots, x_{n'+i-1}) = x_{n'+i}$
        \item $\tilde{G}_i(s) = u(\tilde{G}_{i-1}(s, G_i(s))) = u(S_{n'+i-1}(x_1, \dots, x_{n'+i-1}), x_{n'+i}) = S_{n'+i}(x_1, \dots, x_{n'+i})$
    \end{itemize}
    and so the required follows.
\end{proof}

Given the previous Lemma, we bound the error of the model by comparing the number of possible states to the number of possible inputs.

\begin{proof}[Proof of Theorem \ref{thm:lower_bound}]
    From Lemma \ref{lem:fixed_state}, there exists some function $G: \gS \to \sD^n$ s.t. $H_{u,r} = G \circ S_{n'}$. For each $\vx$, we denote by $\tilde{\vx}$ the sequence $\BOS, \vx, \COPY$. Now, observe the following:
    \begin{align*}
        1-\err_{\gD_n}(H_{u,r}) &= \Pr_{\gD_n} \left[H_{u,r}(\tilde{\vx}) = \vx\right] \\
        &= \frac{1}{D^n} \sum_{\vx \in \sD^n} \1\{H_{u,r}(\tilde{\vx}) = \vx\} \\
        &= \frac{1}{D^n} \sum_{s \in \gS} \sum_{\vx \in S_{n+2}^{-1}(\tilde{\vx})} \1\{G \circ S_{n'+2}(\tilde{\vx}) = \vx\} \\
        &= \frac{1}{D^n} \sum_{s \in \gS} \sum_{\vx \in S_{n+2}^{-1}(\tilde{\vx})} \1\{G(s) = \vx\} \le \frac{\abs{\gS}}{D^n}\\
    \end{align*}
\end{proof}

\end{document}